\begin{document}

\title{Identifiability and Transportability in Dynamic Causal Networks
}


\author{        Gilles Blondel \and
	Marta Arias         \and
        Ricard Gavald\`a
}


\institute{
           G. Blondel \at
              Universitat Polit\`ecnica de Catalunya \\
              \email{gillesblondel@yahoo.com}           
              \and
	M. Arias \at
              Universitat Polit\`ecnica de Catalunya \\
              \email{marias@cs.upc.edu}           
           \and
           R. Gavald\`a \at
              Universitat Polit\`ecnica de Catalunya \\
              \email{gavalda@cs.upc.edu}           
}

\date{}

\maketitle

\begin{abstract}
In this paper we propose a causal analog to the purely observational Dynamic Bayesian Networks, which we call Dynamic Causal Networks. We provide a sound and complete algorithm for identification of Dynamic Causal Networks, namely, for computing the effect of an intervention or experiment, based on passive observations only, whenever possible. We note the existence of two types of confounder variables that affect in substantially different ways the identification procedures, a distinction with no analog in either Dynamic Bayesian Networks or standard causal graphs. We further propose a procedure for the transportability of causal effects in Dynamic Causal Network settings, where the result of causal experiments in a source domain may be used for the identification of causal effects in a target domain. 
\keywords{Causal analysis \and Dynamic modeling}
\end{abstract}

\section{Introduction}

Bayesian Networks (BN) are a canonical formalism for representing probability distributions over sets of variables and reasoning about them. 
A useful extension for modeling phenomena with recurrent temporal behavior are Dynamic Bayesian Networks (DBN). 
While regular BN are directed acyclic graphs, DBN may contain cycles, with some edges indicating dependence of a variable at time $t+1$ on another variable at time $t$. The cyclic graph in fact compactly represents an infinite acyclic graph formed by 
infinitely many replicas of the cyclic net, with some of the edges linking nodes in the same replica,
and others linking nodes in consecutive replicas. 

BN and DBN model conditional (in)dependences, so they are restricted to observational, non-interventional data or, equivalently, model association, not causality. Pearl's causal graphical models and do-calculus~\cite{pearl1994probabilistic} are a leading approach to modeling causal relations. They are formally similar to BN, as they are directed acyclic graphs with variables as nodes, but edges represent causality. A new notion is that of a {\em confounder,} an unobserved variable $X$ that causally influences two variables $Y$ and $Z$ so that the association between $Y$ and $Z$ may erroneously be taken for causal influence. Confounders are unnecessary in BNs since the association between $Y$ and $Z$ represents their correlation, with no causality implied. Causal graphical models allow to consider the effect of interventions or experiments, that is, externally forcing the values of some variables regardless of the variables that causally affect them, and studying the results. 

The do-calculus is an algebraic framework for reasoning about such experiments: An expression $\Pr(Y|do(X))$ indicates the probability distribution of a set of variables $Y$ upon performing an experiment on another set $X$. In some cases, the effect of such an experiment can be obtained from observational data only; this is convenient as some experiments may be impossible, expensive, or unethical to perform. 
When the expression $\Pr(Y|do(X))$, for a given causal network, can be rewritten as an expression containing only observational probabilities, without a do operator, we say that it is {\em identifiable}. \cite{shpitser2006identification,huang2006identifiability} showed that a do-expression is identifiable if and only if it can be rewritten in this way with a finite number of applications of the three rules of do-calculus, and \cite{shpitser2006identification} proposed the ID algorithm which performs this transformation if at all possible, or else returns {\em fail} indicating non-identifiability. 

In this paper we use a causal analog of DBNs to model phenomena where a finite set of variables evolves over time, with some variables causally influencing others at the same time $t$ but also others at time $t+1$. The infinite DAG representing these causal relations can be folded, if regular enough, into a directed graph, with some edges indicating intra-replica causal effects and other indicating  effect on variables in the next replica. Central to this representation is of course the intuitive fact that causal relations are directed towards the future, and never towards the past.

Existing work on dynamic causal models focuses on the discovery of causal models from data and on causal reasoning given a causal model. Regarding the discovery of causal models in dynamic systems \cite{iwasaki1989causality} and \cite{dash2008note} propose an algorithm to establish an ordering of the variables corresponding to the temporal order of propagation of causal effects. Methods for the discovery of cyclic causal graphs from data have been proposed using independent component analysis \cite{lacerda2012discovering} and using local d-separation criteria \cite{meek2014toward}. Existing algorithms for causal discovery from static data have been extended to the dynamic setting  by \cite{moneta2006graphical} and \cite{chicharro2015algorithms}. \cite{dahlhaus2003causality,white2010granger,white2011linking} discuss the discovery of causal graphs from time series by including granger causality concepts into their causal models. Our paper does not address causal discovery from data. Given the formal description of a dynamic system under a set of assumptions, our paper proposes algorithms that identify the modified trajectory of the system over time, after an intervention.

Dynamic causal systems are often modeled with sets of differential equations. However \cite{dashfundamental} \cite{dash2001caveats} \cite{dash2005restructuring} show the caveats of causal discovery of dynamic models based on differential equations which pass through equilibrium states, and how causal reasoning based on such models may fail. \cite{voortman2012learning} propose an algorithm for discovery of causal relations based on differential equations while ensuring those caveats due to system equilibrium states are taken into account. Time scale and sampling rate at which we observe a dynamic system play a crucial role in how well the obtained data may represent the causal relations in the system. \cite{aalen2014can} discuss the difficulties of representing a dynamic system with a DAG built from discrete observations and \cite{gong2015discovering} argue that under some conditions the discovery of temporal causal relations is feasible from data sampled at lower rate than the system dynamics. Our paper assumes that the observation time-scale is sufficiently small compared to the system dynamics, and that causal models include the non-equilibrium causal relations and not only those under equilibrium states. We assume that a stable set of causal dependencies exist which generate the system evolution along time. Our proposed algorithms take such models (and under these assumptions) as an input and predict the system evolution upon intervention on the system.

Regarding causal reasoning given a dynamic causal model, one line of research is based on time series and granger causality concepts \cite{eichler2010granger,eichler2012causal,eichler2012causal2}. \cite{queen2009intervention} use multivariate time series for identification of causal effects in traffic flow models. \cite{lauritzen2002chain} discuss intervention in dynamic systems in equilibrium, for several types of time-discreet and time-continuous generating processes with feedback. \cite{didelezcausal} uses local independence graphs to represent time-continuous dynamic systems and identify the effect of interventions by re-weighting involved processes.

Existing work on causal models does not thoroughly address causal reasoning in dynamic systems using do-calculus. \cite{eichler2010granger,eichler2012causal,eichler2012causal2} discuss back-door and front-door criteria in time-series but do not extend to the full power of do-calculus as a complete logic for causal identification. One of the advantages of do-calculus is its non-parametric approach so that it leaves the type of functional relation between variables undefined. Our paper extends the use of do-calculus to time series while requiring less restrictions than parametric causal analysis. Parametric approaches may require to differentiate the intervention impacts depending on the system state, non-equilibrium or equilibrium, while our non parametric approach is generic across system states.


Required work is to precisely define the notion and semantics of do-calculus and unobserved confounders in the dynamic setting and investigate whether and how existing do-calculus algorithms for identifiability of causal effects can be applied to the dynamic case. 

As a running example (more for motivation than for its accurate modeling of reality), let us consider two roads joining the same two cities, where drivers choose every day to use one or the other road. The average travel delay between the two cities any given day depends on the traffic distribution among the two roads. Drivers choose between a road or another depending on recent experience, in particular how congested a road was last time they used it. Figure~\ref{fig:dynamic_no_confounder_extended} indicates these relations: the weather($w$) has an effect on traffic conditions on a given day ($tr1$, $tr2$) which affects the travel delay on that same day ($d$). Driver experience influences the road choice next day, impacting $tr1$ and $tr2$. To simplify, we assume that drivers have short memory, being influenced by the conditions on the previous day only. This infinite network can be folded into a finite representation as shown in Figure~\ref{fig:dynamic_no_confounder_compact}, where $+1$ indicates an edge linking two consecutive replicas of the DAG. Additionally, if one assumes the weather to be an unobserved variable then it becomes a {\em confounder} as it causally affects two observed variables, as shown in Figure~\ref{fig:dcn_confounder_compact}. We call the confounders with causal effect over variables in the same time slice {\em static confounders}, and confounders with causal effect over variables at different time slices {\em dynamic confounders}. Our models allow for causal identification with both types of confounders, as will be discussed in Section~\ref{sec:id_dcn}. 

This setting enables the resolution of causal effect identification problems where causal relations are recurrent over time. These problems are not solvable in the context of classic DBNs, as causal interventions are not defined in such models. For this we use causal networks and do-calculus. However, time dependencies can't be modeled with static causal networks. As we want to predict the trajectory of the system over time after an intervention, we must use a dynamic causal network. Using our example, in order to reduce travel delay traffic controllers could consider actions such as limiting the number of vehicles admitted to one of the two roads. We would like to predict the effect of such action on the travel delay a few days later, e.g. $\Pr(d_{t+\alpha}|do(tr1_t))$.


Our contributions in this paper are:

\begin{itemize}
\item We introduce Dynamic Causal Networks (DCN) as an analog of Dynamic Bayesian Networks for causal reasoning in domains that evolve over time. We show how to transfer the machinery of Pearl's do-calculus~\cite{pearl1994probabilistic} to DCN. 
\item We extend causal identification algorithms \cite{tianphd,shpitser2006identification,shpitser2012efficient} to the identifiability of causal effects in DCN settings. Given the expression $P(Y_{t+\alpha}|do(X_t))$, the algorithms either compute an equivalent do-free formula or conclude that such a formula does not exist. In the first case, the new formula provides the distribution of variables $Y$ at time $t+\alpha$
given that a certain experiment was performed on variables $X$ at time $t$. For clarity, we present first an algorithm that is sound but not complete (Section \ref{sec:id_dcn}), then give a complete one that is more involved to describe and justify (Section \ref{sec:complete_dcn}).
\item Unobserved confounder variables are central to the formalism of do-calculus. We observe a subtle difference between two types of unobserved confounder variables in DCN (which we call static and dynamic). This distinction is genuinely new to DCN, as it appears neither in DBN nor in standard causal graphs, yet the presence or absence of unobserved dynamic confounders has crucial impacts on the post-intervention evolution of the system over time and on the computational cost of the algorithms.
\item Finally, we extend from standard Causal Graphs to DCN the results by \cite{pearl2011transportability} on transportability, namely on whether causal effects obtained from experiments in one domain can be transferred to another domain with similar causal structure. This opens the way to studying relational knowledge transfer learning \cite{pan2010survey} of causal information in domains with a time component. 
\end{itemize}

\begin{figure}
\begin{center}
\includegraphics[width=0.45\textwidth]{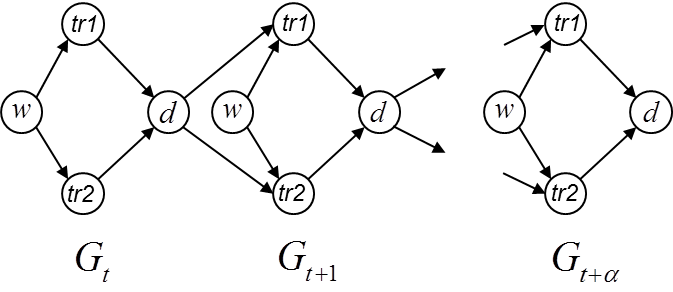}
\end{center}
\caption{A dynamic causal network. The weather $w$ has an effect on traffic flows $tr1$, $tr2$, which in turn have an impact on the average travel delay $d$. Based on the travel delay car drivers may choose a different road next time, having a causal effect on the traffic flows.}
\label{fig:dynamic_no_confounder_extended}
\end{figure}

\section{Previous Definitions and Results}

In this section we review the definitions and 
basic results on the three existing notions
that are the basis of our work: DBN, causal networks, and do-calculus. New definitions introduced in this paper
are left for Section~\ref{sec:definitions}. 

All formalisms in this paper model joint probability distributions over a set of variables. For static models (regular BN and Causal Networks) the set of variables is fixed. For dynamic models (DBN and DCN), there is a finite set of ``metavariables'', meaning variables that evolve over time. For a metavariable $X$ and an integer $t$, $X_t$ is the variable denoting the value of $X$ at time $t$. 

Let $V$ be the set of metavariables for a dynamic model. We say that a probability distribution $P$ is {\em time-invariant} if $P(V_{t+1}| V_t)$ is the same for every $t$. Note that this does not mean that $P(V_t) = P(V_{t+1})$ for every $t$, but rather that the laws governing the evolution of the variable do not change over time. For example, planets do change their positions around the Sun, but the Kepler-Newton laws that govern their movement do not change over time. Even if we performed an intervention (say, pushing the Earth away from the Sun for a while), these laws would immediately kick in again when we stopped pushing. The system would not be time-invariant if e.g. the gravitational constant changed over time.

\subsection{Dynamic Bayesian Networks}

Dynamic Bayesian Networks (DBN) are graphical models that generalize Bayesian Networks (BN) in order to model time-evolving phenomena. 
We rephrase them as follows. 

\begin{definition}
A DBN is a directed graph $D$ over a set of nodes that represent time-evolving metavariables. Some of the arcs in the graph have no label, and others are labeled ``$+1$''.
It is required that the sub-graph $G$ formed by the nodes and the unlabeled edges must be acyclic, therefore forming a Directed Acyclic Graph (DAG).
Unlabeled arcs denote dependence relations between metavariables within the same time step, and arcs labeled ``$+1$'' denote dependence between a variable at one time and another variable at the next time step.
\end{definition}

\begin{definition}
A DBN with graph  $G$ {\em represents} an infinite Bayesian Network $\hat G$ as follows. Timestamps $t$ are the integer numbers; $\hat G$ will thus be a biinfinite graph. For each metavariable $X$ in $G$ and each time step $t$ there is a variable 
$X_t$ in $\hat G$. The set of variables indexed by the same $t$ is denoted $G_t$ and called ``the slice at time $t$''. There is an edge from $X_t$ to $Y_t$ iff there is an unlabeled edge from $X$ to $Y$ in $G$, and there is an edge from $X_t$ to $Y_{t+1}$ iff 
there is an edge labeled ``$+1$'' from $X$ to $Y$ in $G$. Note that $\hat G$ is acyclic.
\end{definition}

The set of metavariables in $G$ is denoted $V(G)$, or simply $V$ when $G$ is clear from the context.
Similarly $V_t(G)$ or $V_t$ denote the variables in the $t$-th slice of $G$. 


In this paper we will also use transition matrices to model probability distributions. Rows and columns are indexed by tuples assigning values to each variable, and the $(v,w)$ entry of the matrix represents the probability $P(V_{t+1} = w|V_t = v)$.
Let $T_t$ denote this transition matrix. Then we have, in matrix notation, $P(V_{t+1})=T_t\,P(V_t)$  and, more in general, $P(V_{t+\alpha}) = (\prod_{i=t}^{t+\alpha-1}T_i) \, P(V_t)$. In the case of time-invariant distributions, all $T_t$ matrices are the same matrix $T$, so  $P(V_{t+\alpha}) = T^{\alpha} P(V_t)$.

\subsection{Causality and Do-Calculus}

The notation used in our paper is based on causal models and do-calculus \cite{pearl1994probabilistic,pearl2000causality}.

\begin{definition}[Causal Model]
\label{def:causalnetwork}
A causal model over a set of variables $V$ is a tuple $M=\langle V,U,F,P(U) \rangle$, where U is a set of random variables that are determined outside the model ("exogenous" or "unobserved" variables) but that can influence the rest of the model, $V=\{ V_1,V_2,...V_n\}$ is a set of n variables that are determined by the model ("endogenous" or "observed" variables), $F$ is a set of n functions such that $V_{k} = f_k(pa(V_{k}),U_{k}, \theta_k)$, $pa(V_{k})$ are the parents of $V_{k}$ in $M$, $\theta_k$ are  a set of constant parameters and $P(U)$ is a joint probability distribution over the variables in $U$.
\end{definition}

In a causal model the value of each variable $V_k$ is assigned by a function $f_k$ which is determined by constant parameters $\theta_k$, a subset of $V$ called the "parents" of $V_k$ ($pa(V_{k}$)) and a subset of $U$ ($U_k$).

A causal model has an associated graphical representation (also called the "induced graph of the causal model") in which each observed variable $V_k$ corresponds to a vertex, there is one edge pointing to $V_k$ from each of its parents, i.e. from the set of vertex ${pa(V_{k})}$ and there is a doubly-pointed edge between the vertex influenced by a common unobserved variable in $U$ (see Figure~\ref{fig:dcn_confounder_compact}). In this paper we call the unobserved variables in $U$ "unobserved confounders" or "confounders" for simplicity. 

Causal graphs encode the causal relations between variables in a model. The primary purpose of causal graphs is to help estimate the joint probability of some of the variables in the model upon controlling some other variables by forcing them to specific values; this is called an action,  experiment or intervention. Graphically this is represented by removing all the incoming edges (which represent the causes) of the variables in the graph that we control in the experiment. Mathematically the $do()$ operator represents this experiment on the variables. Given a causal graph where $X$ and $Y$ are sets of variables, the expression $P(Y|do(X))$ is the joint probability of $Y$ upon doing an experiment on the controlled set $X$.

\begin{figure}
\begin{center}
\includegraphics[width=0.15\textwidth]{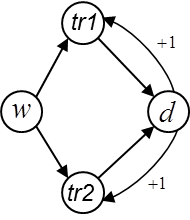}
\end{center}
\caption{Compact representation of a dynamic causal network where $+1$ indicates an edge linking a variable in $G_t$ with a variable in $G_{t+1}$.}
\label{fig:dynamic_no_confounder_compact}
\end{figure}

A causal relation represented by $P(Y|do(X))$ is said to be \textit{identifiable} if it can be uniquely computed from an observed, non-interventional, distribution of the variables in the model. In many real world scenarios it is impossible, impractical, unethical or too expensive to perform an experiment, thus the interest in evaluating its effects without actually having to perform the experiment.

The three rules of do-calculus \cite{pearl1994probabilistic} allow us to transform expressions with $do()$ operators into other equivalent expressions, based on the causal relations present in the causal graph. 

For any disjoint sets of variables $X$, $Y$, $Z$ and $W$:

\begin{enumerate}
\item
$P(Y|Z,W,do(X))=P(Y|W,do(X))$ if $(Y\perp Z|X,W)_{G_{\overline{X}}}$
\item 
$P(Y|W,do(X),do(Z))=P(Y|Z,W,do(X))$ if $(Y\perp Z|X,W)_{G_{\overline{X}\underline{Z}}}$
\item 
$P(Y|W,do(X),do(Z))=P(Y|W,do(X))$ if $(Y\perp Z|X,W)_{G_{\overline{X}\overline{Z(W)}}}$
\end{enumerate}

$G_{\overline{X}}$ is the graph $G$ where all edges incoming to $X$ are removed. $G_{\underline{Z}}$ is the graph $G$ where all edges outgoing from $Z$ are removed. Z(W) is the set of Z-nodes that are not ancestors of any W-nodes in $G_{\overline{X}}$.

Do-calculus was proven to be complete \cite{shpitser2006identification,huang2006identifiability} in the sense that if an expression
cannot be converted into a do-free one by iterative application of the three do-calculus rules, then it is not identifiable.

\subsection{The ID Algorithm}
\label{sec:id}

The ID algorithm \cite{shpitser2006identification}, and earlier versions by \cite{tianpearl2002,tian2004identifying} implement an  iterative application of do-calculus rules to transform a causal expression $P(Y|do(X))$ into an equivalent expression without any $do()$ terms in semi-Markovian causal graphs (with confounders). This enables the identification of interventional distributions from non-interventional data in such graphs.

The ID algorithm is sound and complete \cite{shpitser2006identification} in the sense that if a do-free equivalent expression exists it will be found by the algorithm, and if it does not exist the algorithm will exit and provide an error.

The algorithm specifications are as follows. Inputs: causal graph $G$, variable sets $X$ and $Y$, and a probability distribution $P$ over the observed variables in $G$; Output: an expression for $P(Y|do(X))$ without any $do()$ terms, or {\em fail}.

\textbf{\em Remark:\ } 
In our algorithms of Sections~\ref{sec:id_dcn} and \ref{sec:complete_dcn}, we may invoke the ID algorithm with a slightly more complex input: $P(Y|Z,do(X))$ (note the ``extra'' $Z$ to the right of the conditioning bar). In this case, we can solve the identification problem for the more complex expression with two calls to the ID algorithm using the following identity (definition of conditional probability):

\[P(Y|Z,do(X)) = \frac{P(Y,Z|do(X))}{P(Z|do(X))}\]

The expression $P(Y|Z,do(X))$ is thus identifiable if and only if both $P(Y,Z|do(X))$ and $P(Z|do(X))$ are \cite{shpitser2006identification}.

Another algorithm for the identification of causal effects is given in \cite{shpitser2012efficient}. 

The algorithms we propose in this paper show how to apply existing causal identification algorithms to the dynamic setting. In this paper we will refer as "ID algorithm" any existing causal identification algorithm.

\section{Dynamic Causal Networks and Do-Calculus}
\label{sec:definitions}
In this section we introduce the main definitions of this paper and state several lemmas based on the application of do-calculus rules to DCNs.

In the Definition~\ref{def:causalnetwork} of causal model the functions $f_k$ are left unspecified and can take any suitable form that best describes the causal dependencies between variables in the model. In natural phenomenon some variables may be time independent while others may evolve over time. However rarely does Pearl specifically treat the case of dynamic variables.

The definition of Dynamic Causal Network is an extension of Pearl's causal model in Definition \ref{def:causalnetwork}, by specifying that the variables are sampled over time, as in \cite{valdes2011effective}.

\begin{definition}[Dynamic Causal Network] 
\label{def:dynamiccausalnetwork}
A dynamic causal network $D$ is a causal model in which the set $F$ of functions is such that $V_{k,t} = f_k(pa(V_{k,t}),U_{k,t-\alpha}, \theta_k)$; where $V_{k,t}$ is the variable associated with the time sampling $t$ of the observed process $V_k$; $U_{k,t-\alpha}$ is the variable associated with the time sampling $t-\alpha$ of the unobserved process $U_k$; $t$ and $\alpha$ are discreet values of time.
\end{definition}

Note that $pa(V_{k,t})$ may include variables in any time sampling previous to $t$ up to and including $t$, depending on the delays of the direct causal dependencies between processes in comparison with the sampling rate. $U_{k,t-\alpha}$ may be generated by a noise process or by an unobserved confounder. In the case of noise, we assume that all noise processes $U_{k}$ are independent of each other, and that their influence to the observed variables happens without delay, so that $\alpha=0$. In the case of unobserved confounders, we assume $\alpha\geq 0$ as causes precede their effects.

To represent unobserved confounders in DCN, we extend to the dynamic context the framework developed in \cite{pearl1991theory} on causal model equivalence and latent structure projections. Let's consider the projection algorithm \cite{verma1993graphical}, which takes a causal model with unobserved variables and finds an equivalent model (with the same set of causal dependencies), called a "dependency-equivalent projection", but with no links between unobserved variables and where every unobserved variable is a parent of exactly two observed variables. 

The projection algorithm in DCN works as follows. For each pair $(V_{m},V_{n})$ of of observed processes, if there is a directed path from $V_{m,t}$ to $V_{n,t+\alpha}$ through unobserved processes then we assign a directed edge from $V_{m,t}$ to $V_{n,t+\alpha}$; however if there is a divergent path between them through unobserved processes then we assign a bidirected edge, representing an unobserved confounder. 

In this paper we represent all DCN by their dependency-equivalent projection. Also we assume the sampling rate to be adjusted to the dynamics of the observed processes. However, both the directed edges and the unobserved confounder paths may be crossing several time steps depending on the delay of the direct causal dependencies in comparison with the sampling rate. We now introduce the concept of static and dynamic confounder.

\begin{definition}[Static Confounder]
\label{def:staticconfounder}
Let $D$ be a DCN. Let $\beta$ be the maximal number of time steps crossed by any of the directed edges in $D$. Let $\alpha$ be the maximal number of time steps crossed by an unobserved confounder path. If $\alpha\leq\beta$ then the unobserved confounder is called Static.
\end{definition}

\begin{definition}[Dynamic Confounder]
\label{def:dynamicconfounder}
Let $D$, $\beta$ and $\alpha$ be as in Definition \ref{def:staticconfounder}. If $\alpha>\beta$ then the unobserved confounder is called Dynamic. More specifically, if $\beta<\alpha\leq 2\beta$ we call it "first order" Dynamic Confounder; if $\alpha>2\beta$ we call it "higher order" Dynamic Confounder.
\end{definition}

In this paper, we consider three case scenarios in regards to DCN and their time-invariance properties. If a DCN $D$ contains only static confounders we can construct a first order Markov process in discrete time, by taking $\beta$ (per Definition \ref{def:staticconfounder}) consecutive time samples of the observed processes $V_k$ in $D$. This does not mean the DCN generating functions $f_k$ in Definition \ref{def:dynamiccausalnetwork} are time-invariant, but that a first order Markov chain can be built over the observed variables when marginalizing the static confounders over $\beta$ time samples.

In a second scenario, we consider DCN with first order dynamic confounders. We can still construct a first order Markov process in discrete time, by taking $\beta$ consecutive time samples. However we will see in later sections how the effect of interventions on this type of DCN has a different impact than on DCN with static confounders.

Finally, we consider DCN with higher order dynamic confounders, in which case we may construct a first order Markov process in discrete time by taking a multiple of $\beta$ consecutive time samples.

As we will see in later sections, the difference between these three types of DCN is crucial in the context of identifiability. 
Dynamic confounders cause a time invariant transition matrix to become dynamic after an intervention, e.g. the post-intervention transition matrix will change over time. However, if we perform an intervention on a DCN with static confounders, the network will return to its previous time-invariant behavior after a transient period. These differences have a great impact on the complexity of the causal identification algorithms that we present. 

Considering that causes precede their effects, the associated graphical representation of a DCN is a DAG. All DCN can be represented as a biinfinite DAG with vertices $V_{k,t}$; edges from $pa(V_{k,t})$ to $V_{k,t}$; and confounders (bi-directed edges). DCN with static confounders and DCN with first order dynamic confounders can be compactly represented as $\beta$ time samples (a multiple of $\beta$ time samples for higher order dynamic confounders) of the observed processes $V_{k,t}$; their corresponding edges and confounders; and some of the directed and bi-directed edges marked with a "+1" label representing the dependencies with the next time slice of the DCN.

\begin{definition}[Dynamic Causal Network identification] Let $D$ be a DCN, and $t$, $t+\alpha$ be two time slices of $D$. Let $X$ be a subset of $V_{t}$ and $Y$ be a subset of $V_{t+\alpha}$. 
The DCN identification problem consists of computing the probability distribution $P(Y|do(X))$ from the observed probability distributions in $D$, i.e. computing an expression for the distribution containing no do() operators.
\end{definition}

In the definition above we always assume that $X$ and $Y$ are disjoint. 
In this version we only consider the case in which all intervened variables $X$ are in the same time sample. It is not difficult to extend our algorithm to the general case.

The following lemma is based on the application of do-calculus to DCN. Intuitively, future actions have no impact on the past.

\begin{lemma}[Future actions]\label{lem:futureact} 
Let $D$ be a DCN. 
Take any sets $X \subseteq V_{t}$ and $Y \subseteq V_{t-\alpha}$, 
with $\alpha >0$. 
Then for any set $Z$ the following equalities hold:

\begin{enumerate}
\item $P(Y|do(X),do(Z))=P(Y|do(Z))$
\item $P(Y|do(X))=P(Y)$
\item $P(Y|Z,do(X))=P(Y|Z)$ whenever $Z \subseteq V_{t-\beta}$ with $\beta > 0$. 
\end{enumerate}
\end{lemma}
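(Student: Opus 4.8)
The plan is to reduce every equality in the lemma to a single application of Rule~3 of do-calculus, and to discharge each resulting side condition by one structural observation: in the biinfinite DAG underlying a DCN, no directed edge points backward in time. Concretely, I would work in the graph $\hat G$ obtained from $D$ by keeping the unobserved processes as explicit latent nodes, so that every edge is directed; the defining feature of a DCN (``causes precede their effects'') then says that every directed edge runs from a node at time $s'$ to a node at time $s$ with $s'\le s$. This property is insensitive to the static/dynamic confounder distinction, since a confounder is a common ancestor whose two outgoing edges are themselves forward-directed. As usual, d-separation in $\hat G$ (equivalently m-separation in the projected ADMG) is sound for the distributions $D$ generates, and every path between two fixed nodes is finite.

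The heart of the argument is the following claim. Let $X\subseteq V_t$, let $A$ and $W$ be arbitrary sets of nodes lying at times strictly less than $t$, and let $Z$ be an arbitrary set of nodes; then $X$ and $A$ are d-separated given $W\cup Z$ in $\hat G_{\overline X,\overline Z}$. I would prove this by examining an arbitrary path $p$ from some $x\in X$ to some $a\in A$ in $\hat G_{\overline X,\overline Z}$, in which $x$ and all nodes of $Z$ are rootless. The first edge of $p$ is outgoing from $x$. If $p$ has no collider, then starting with an outgoing edge forces $p$ to be entirely forward-directed, so $a$ is a descendant of $x$ and hence lies at time $\ge t$, contradicting $a\in A$. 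So $p$ has a collider; let $n$ be the first one. The prefix of $p$ up to $n$ is collider-free and begins with an outgoing edge, hence is a directed path $x\to\cdots\to n$, so $n$ and all its descendants lie at time $\ge t$. Such a node cannot belong to $W$ (which sits at times $<t$), and cannot belong to $Z$ either, since $n$ is a collider and every proper descendant of $n$ has an incoming edge, whereas nodes of $Z$ are rootless in $\hat G_{\overline Z}$. Thus $n$ is an unactivated collider and $p$ is blocked; the d-separation follows.

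Given the claim, I would finish as follows. For (2): Rule~3 rewrites $P(Y|do(X))$ as $P(Y)$ provided $(Y\perp X)_{\hat G_{\overline X}}$ — here the rule's set ``$X(\emptyset)$'' is vacuously all of $X$ — and this is the claim with $A=Y$ (which sits at time $t-\alpha<t$) and $W=Z=\emptyset$. For (1): Rule~3 rewrites $P(Y|do(X),do(Z))$ as $P(Y|do(Z))$ provided $(Y\perp X|Z)_{\hat G_{\overline X,\overline Z}}$, which is the claim with $A=Y$ and $W=\emptyset$; in particular (2) is the special case $Z=\emptyset$ of (1). For (3): since ancestry in $\hat G$ runs forward in time and $Z\subseteq V_{t-\beta}$ lies strictly before $t$, no node of $X$ is an ancestor of a node of $Z$, so the set ``$X(Z)$'' appearing in Rule~3 is again all of $X$; Rule~3 then rewrites $P(Y|Z,do(X))$ as $P(Y|Z)$ provided $(Y\perp X|Z)_{\hat G_{\overline X}}$, which is the claim with $A=Y$ and $W=Z$ (legitimate because $Z$ lies at times $<t$). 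Alternatively, (3) follows from (2) applied to the past set $Y\cup Z$ together with the identity $P(Y|Z,do(X))=P(Y,Z|do(X))/P(Z|do(X))$ from the Remark.

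I expect the only real friction to be bookkeeping rather than mathematics: correctly matching the lemma's named sets to the $X,Y,Z,W$ slots of each do-calculus rule, and checking the ``$Z(W)$'' clause of Rule~3 (dispatched above via the forward-in-time ancestry observation), together with the point that conditioning on $Z$ inside $\hat G_{\overline Z}$ cannot open a collider precisely because the $Z$-nodes are rootless there. Everything else — that descendants stay weakly in the future, and that a collider-free path leaving $X$ must be forward-directed — is routine.
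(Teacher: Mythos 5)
Your proof is correct and takes essentially the same route as the paper's: each equality is a single application of Rule~3 of do-calculus, discharged by the fact that no edge of a DCN points backward in time, so $X\subseteq V_t$ cannot be an ancestor of, or remain d-connected (after the appropriate edge deletions) to, anything strictly in the past. The only differences are cosmetic: you verify the d-separation premises explicitly via the collider/path argument where the paper cites the Shpitser--Pearl result that interventions on non-ancestors of $Y$ do not affect $Y$, and for item (3) the paper routes through the identity $P(Y|Z,do(X))=P(Y,Z|do(X))/P(Z|do(X))$, which you mention only as an alternative.
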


\begin{proof}
The first equality 
derives from rule 3 and the proof in~\cite{shpitser2006identification} that interventions on variables which are not ancestors of $Y$ in $D$ have no effect on $Y$. The second is the special case $Z=\emptyset$. We can transform the third expression using the equivalence $P(Y|Z,do(X))= P(Y,Z|do(X))/P(Z|do(X))$; since $Y$ and $Z$ precede $X$ in $D$, by rule 3
$P(Y,Z|do(X)) = P(Y,Z)$ and 
$P(Z|do(X)) = P(Z)$,
and then the above equals
$P(Y,Z)/P(Z) = P(Y|Z)$.
\qed\end{proof}

In words, traffic control mechanisms applied next week have no causal effect on the traffic flow this week.

The following lemma limits the size of the graph to be used for the identification of DCNs.

\begin{lemma}
\label{lem:graphID}
Let $D$ be a DCN.
Let $G$ be the sub-graph of $\hat D$ consisting of all time slices in between (and including) $t_x$ and $t_y$. Let $G'$ be the sub-graph $G$ augmented with the time slice preceding it. If $P(Y|do(X))$ is identifiable in $D$ then it is identifiable in $G'$ and the identification provides the same result on both graphs.
\end{lemma}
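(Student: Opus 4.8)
The plan is to prune the biinfinite DAG $\hat{D}$ from both ends of the time axis, checking that each pruning step preserves both identifiability and the do-free expression that is produced. First I would discard every slice after $t_y$. Every directed edge of $\hat{D}$ either stays within a slice or advances one slice, so the time index is non-decreasing along any directed path; hence no variable in a slice strictly after $t_y$ is an ancestor of $Y$, and (since $t_x\le t_y$) none is an ancestor of $X$. By the standard reduction to the ancestral subgraph that underlies the ID algorithm — restricting the graph to the ancestors of $Y$ and the intervention to $X$ intersected with those ancestors does not change $P(Y\mid do(X))$ \cite{shpitser2006identification,tian2004identifying} — we may replace $\hat{D}$ by its restriction to the slices $\le t_y$ without affecting the identification problem. (If instead $t_y<t_x$, Lemma~\ref{lem:futureact}(2) already gives $P(Y\mid do(X))=P(Y)$, trivially identifiable inside $G'$, so assume henceforth $t_x\le t_y$.)

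Next I would collapse the infinite past onto the single slice $t_x-1$. Writing $R=V_{t_x-1}$ for that whole slice, the law of total probability in the post-intervention model gives $P(Y\mid do(X))=\sum_r P(Y\mid R=r,do(X))\,P(R=r\mid do(X))$, and Lemma~\ref{lem:futureact}(2), with $R$ (strictly before $t_x$) in the role of $Y$, turns $P(R=r\mid do(X))$ into the observational marginal $P(R=r)$, which is available inside $G'$. It then suffices to argue (i) that $P(Y\mid R=r,do(X))$ is the same quantity whether computed in the truncated $\hat{D}$ or in $G'$ viewed as a stand-alone DCN whose earliest slice carries the marginal $P(V_{t_x-1})$ and still bears the confounders linking it to slice $t_x$; and (ii) that running the ID algorithm on $G'$ actually reproduces this decomposition, which it does because the slice-$(t_x-1)$ nodes of $G'$ are roots, so the same total-probability step applies there. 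Claim (i) holds structurally: in $\hat{D}$ every variable of the slices $[t_x,t_y]$ has all its parents in slices $[t_x-1,t_y]$ and all its confounder partners in slices $[t_x-1,t_y]\cup\{t_y+1\}$ (directed and bidirected edges each span at most one slice), so conditioning on the full slice $R$ and intervening on $X$ leaves the mechanism that generates slices $[t_x,t_y]$, hence $Y$, unchanged between the two models.

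The delicate point is claim (i) at the two window boundaries. A confounder between slices $t_y$ and $t_y+1$ is harmless — its far endpoint was removed in the first pruning, so it degenerates into private noise of a slice-$t_y$ variable. The genuine work is a confounder between a slice-$(t_x-1)$ node and a non-intervened slice-$t_x$ node: one must check that conditioning on all of $R$ transmits downstream exactly the information about that latent common cause that is observationally available inside $G'$ (in effect through the two-slice marginal $P(V_{t_x-1},V_{t_x})$), so that the truncation neither creates nor removes an obstruction. I would discharge this by re-running the ID algorithm on the truncated $\hat{D}$ line by line: after the ancestral reduction and the step that absorbs the non-ancestors of $Y$ into the intervention set, every C-factor the algorithm forms is either supported on slices $[t_x-1,t_y]$ or is a product of confounder-free singleton factors over consecutive earlier slices whose summation telescopes into $P(V_{t_x-1})$; matching this against the run on $G'$ yields that $G'$ fails only when $\hat{D}$ fails and that the returned do-free expressions agree. (Equivalently, one checks that any hedge for $P(Y\mid do(X))$ in $\hat{D}$, being forced to meet $X\subseteq V_{t_x}$ and to have its roots among ancestors of $Y$, is already contained in slices $[t_x-1,t_y]$, hence is a hedge in $G'$.) This boundary-confounder bookkeeping is the part I expect to need the most care.
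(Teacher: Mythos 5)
Your proposal is correct and follows essentially the same route as the paper's (sketched) proof: the future slices are discarded because they contain no ancestors of $Y$ (rule 3 / ancestral reduction), and the entire past is absorbed into the marginal of the single slice $t_x-1$ — the paper does this via Tian-style C-component factorization, you via total probability plus Lemma~\ref{lem:futureact}(2), which amounts to the same thing. Your extra bookkeeping on confounders straddling the $t_x-1$/$t_x$ boundary (via C-factors or hedge containment) is more careful than the paper's sketch and anticipates the hedge-localization arguments the paper only develops later, in Section~\ref{sec:complete_dcn}.
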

\begin{proof} (sketch) By C-component factorization \cite{tianphd}, we decompose the problem as that of  identification of each C-component in $D$ and (if all C-components are identifiable) multiplying all identified quantities to obtain $P(Y|do(X))$. C-components are sets of variables linked by confounder edges in the graph $D\setminus X$. An identifiable C-component is computed as the product of $P(v_i|V_{\pi}^{i-1})$ for each variable $v_i$ in the C-component, where $V_{\pi}^{i-1}$ is the set of all variables preceding $v_i$ in some topological ordering $\pi$ \cite{shpitser2006identification,tianphd}. The C-component factorization involving all the variables preceding the set $G$ leads to the joint distribution of these variables, and can be computed using the joint distribution of the time slice preceding $G$ alone. Also, non-ancestors of $Y$ can be ignored from the graph, by application of do-calculus rule 3, so time slices succeeding $G$ can be discarded. Therefore the identification problem can be computed in the limited graph $G'$.

This result is crucial to reduce the complexity of identification algorithms in dynamic settings. In order to describe the evolution of a dynamic system over time, after an intervention, 
we can run a causal identification algorithm over a limited number of time slices of the DCN, instead of the entire DCN.
\qed\end{proof}

\section{Identifiability in Dynamic Causal Networks}
\label{sec:id_dcn}

In this section we analyze the identifiability of causal effects in the DCN setting. We first study DCNs with static confounders and propose a method for identification of causal effects in DCNs using transition matrices. Then we extend the analysis and identification method to DCNs with dynamic confounders. As discussed in Section \ref{sec:definitions}, both the DCNs with static confounders and  with dynamic confounders can be represented as a Markov chain. For graphical and notational simplicity, we represent these DCN graphically as recurrent time slices as opposed to the shorter time samples, on the basis that one time slice contains as many time samples as the maximal delay of direct causal influence among the processes. Also for notational simplicity we assume the transition matrix from one time slice to the next to be time-invariant; however removing this restriction would not make any of the lemmas, theorems or algorithms invalid, as they are the result of graphical non-parametric reasoning.

Consider a DCN under the above assumptions, and let $T$ be its time invariant transition matrix from any time slice $V_{t}$ to $V_{t+1}$. We assume that there is some time $t_0$ such that the distribution $P(V_{t_0})$ is known. Fix now $t_x > t_0$ and a set $X \subseteq V_{t_x}$. 
We will now see how performing an intervention on $X$ affects the distributions in $D$.

We begin by stating a series of lemmas that apply to DCNs in general.

\begin{lemma}
\label{lem:Tbefore}
Let $t$ be such that $t_0 \le t < t_x$, with $X \subseteq V_{t_x}$. Then $P(V_{t}|do(X)) = T^{t-t_0} P(V_{t_0})$.
Namely, transition probabilities
are not affected by an intervention in the future. 
\end{lemma}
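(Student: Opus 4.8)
The plan is to prove this by induction on $t$, starting from the base case $t = t_0$ and using the already-established Lemma~\ref{lem:futureact} (future actions have no impact on the past) at each step. The base case is immediate: when $t = t_0$ we have $P(V_{t_0}|do(X)) = P(V_{t_0})$ because $X \subseteq V_{t_x}$ with $t_x > t_0$, so $X$ lies strictly in the future of $V_{t_0}$, and part~(2) of Lemma~\ref{lem:futureact} (the special case $Z = \emptyset$) gives exactly $P(V_{t_0}|do(X)) = P(V_{t_0})$, which equals $T^{0} P(V_{t_0})$.

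For the inductive step, suppose $P(V_{t}|do(X)) = T^{t - t_0} P(V_{t_0})$ for some $t$ with $t_0 \le t < t_x - 1$; I want to conclude the same for $t+1$. The key observation is that the transition matrix $T$ was defined (just before the lemma) so that its $(v,w)$ entry is $P(V_{t+1} = w \mid V_t = v)$, the observational one-step transition. So it suffices to show that
\[
P(V_{t+1} \mid do(X)) \;=\; T \cdot P(V_{t} \mid do(X)),
\]
i.e. that the post-intervention distribution at slice $t+1$ is obtained from that at slice $t$ by the \emph{observational} transition matrix. By the law of total probability (summing over the values of $V_t$), this reduces to showing $P(V_{t+1} \mid V_t, do(X)) = P(V_{t+1} \mid V_t)$. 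Since $t + 1 \le t_x$ and $X \subseteq V_{t_x}$, both $V_{t+1}$ and $V_t$ are at or before slice $t_x$; because $X$ is strictly in the future of $V_t$ (as $t < t_x$), part~(3) of Lemma~\ref{lem:futureact} applies with the conditioning set being $V_t$, giving $P(V_{t+1} \mid V_t, do(X)) = P(V_{t+1} \mid V_t)$. Combining with the inductive hypothesis yields $P(V_{t+1} \mid do(X)) = T \cdot T^{t - t_0} P(V_{t_0}) = T^{(t+1) - t_0} P(V_{t_0})$, completing the induction.

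The main subtlety — and the step I would be most careful about — is the application of part~(3) of Lemma~\ref{lem:futureact}: that lemma as stated requires the conditioning set $Z$ to be a subset of $V_{t - \beta}$ for \emph{some} $\beta > 0$ relative to the intervened slice. Here we need $V_t \subseteq V_{t_x - (t_x - t)}$ with $t_x - t > 0$, which holds precisely because $t < t_x$, so the hypothesis is met. One should also note that the full-slice conditioning $P(V_{t+1} \mid V_t, do(X))$ is legitimate as an instance of that lemma since $Y = V_{t+1}$, $X$, and $Z = V_t$ are disjoint (they live in distinct time slices $t+1$, $t_x$, and $t$, and $t_x \ge t+1 > t$ are all distinct). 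Everything else is the standard matrix reformulation of the law of total probability already set up in the DBN subsection, so no genuinely new computation is needed.
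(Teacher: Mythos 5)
Your proof is correct and follows essentially the same route as the paper's: both reduce the claim to Lemma~\ref{lem:futureact} plus an induction on $t$ with base case $P(V_{t_0}) = T^0\,P(V_{t_0})$. The only cosmetic difference is that the paper applies part (2) of that lemma to every slice $t < t_x$ at once and then uses the observational definition of $T$, whereas you keep the $do(X)$ in place and justify each transition step via part (3); both are valid.
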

\begin{proof}
By Lemma~\ref{lem:futureact}, (2),
$P(V_{t}|do(X))= P(V_{t})$ for all such $t$. 
By definition of $T$, this equals $T\,P(V_{t-1})$. Then induct on $t$ with $P(V_{t_0}) = T^0 P(V_{t_0})$ as base.
\qed\end{proof}

\begin{lemma}
\label{lem:A}
Assume that an expression $P(V_{t+\alpha}|V_{t},do(X))$ is identifiable for some $\alpha>0$. Let $A$ be the matrix whose entries $A_{ij}$ correspond to the probabilities $P(V_{t+\alpha} = v_j|V_t = v_i, do(X))$. Then $P(V_{t+\alpha}|do(X)) = A\,P(V_t|do(X))$.
\end{lemma}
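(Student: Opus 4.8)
The plan is to reduce the claimed matrix identity to a single application of the law of total probability inside the post-interventional distribution $P(\cdot \mid do(X))$, which by the semantics of the submodel $M_X$ is itself a genuine probability distribution over $V$, and hence over each slice $V_t$ and $V_{t+\alpha}$. Concretely, I would first fix the intervention $do(X)$ once and for all, so that every probability appearing in the argument lives in the same probability space; from that point on the proof is a manipulation within that space and never invokes the do-calculus rules again.

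Then, for each value $v_j$ of $V_{t+\alpha}$, I would write the total-probability expansion over the mutually exclusive and exhaustive values $v_i$ of $V_t$:
\[
P(V_{t+\alpha}=v_j \mid do(X)) \;=\; \sum_i P(V_{t+\alpha}=v_j \mid V_t=v_i,\, do(X))\; P(V_t=v_i \mid do(X)).
\]
By the definition of the matrix $A$ the first factor is $A_{ij}$, so the right-hand side equals $\sum_i A_{ij}\, P(V_t=v_i\mid do(X))$, which is precisely the $j$-th coordinate of the matrix--vector product $A\,P(V_t\mid do(X))$ in the matrix notation used for transition matrices earlier in the paper. Collecting coordinates over all $j$ yields $P(V_{t+\alpha}\mid do(X)) = A\,P(V_t\mid do(X))$, as claimed.

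The only point that needs care --- and where I expect the (mild) main obstacle to lie --- is justifying that the conditional $P(V_{t+\alpha}\mid V_t,\, do(X))$ indexing $A$ is exactly the object that legitimately enters the total-probability expansion, i.e. that $P(B\mid A,\, do(X))\,P(A\mid do(X)) = P(B, A\mid do(X))$ holds verbatim with the $do(X)$ carried along. This is immediate once one recalls, as in the Remark of Section~\ref{sec:id}, that $P(V_{t+\alpha}\mid V_t,\, do(X))$ is defined as $P(V_{t+\alpha},V_t\mid do(X))/P(V_t\mid do(X))$: the product then telescopes to $P(V_{t+\alpha},V_t\mid do(X))$, and summing over the values of $V_t$ marginalizes $V_t$ out. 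I would close by noting that the identifiability hypothesis is not needed for the equality itself --- it holds for the true interventional quantities regardless --- but is exactly what ensures that the entries $A_{ij}$, and hence the recursion that Lemma~\ref{lem:A} feeds into, can actually be evaluated from observational data.
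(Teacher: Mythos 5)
Your proof is correct and is essentially the paper's own argument: the paper disposes of this lemma with the one-line remark ``case by case evaluation of $A$'s entries,'' which is exactly the entrywise total-probability expansion you spell out (including the observation that $P(V_{t+\alpha}\mid V_t, do(X))$ is defined as a ratio of interventional joints, so the product telescopes). Your closing remark that identifiability is needed only to make $A$ computable, not for the equality itself, is a correct and worthwhile clarification, though be aware the paper's row/column indexing convention for $A$ (and for $T$) is stated in transposed form relative to the product $A\,P(V_t\mid do(X))$; you have sensibly followed the paper's usage.
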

\begin{proof} 
Case by case evaluation of $A$'s entries. 
\qed\end{proof}

\subsection{DCNs with Static Confounders}
\label{sec:static}

Static confounders impact sets of variables within one time slice only, and there are no confounders between variables at different time slices (see Figure~\ref{fig:dcn_confounder_compact}).

The following three lemmas are based on the application of do-calculus to DCNs with static confounders. Intuitively, conditioning on the variables that cause time dependent effects d-separates entire parts (future from past) of the DCN (Lemmas \ref{lem:past}, \ref{lem:futureobs}, \ref{lem:Tafter}).

\begin{lemma}[Past observations and actions]\label{lem:past}
Let $D$ be a DCN with static confounders. Take any set $X$. Let $C \subseteq V_{t}$ be the set of variables in $G_t$ that are direct causes of variables in $G_{t+1}$. Let $Y \subseteq V_{t+\alpha}$ and $Z \subseteq V_{t-\beta}$, with $\alpha > 0$ and $\beta > 0$ (positive natural numbers). The following distributions are identical:

\begin{enumerate}
\item 
$P(Y | do(X),Z,C)$
\item 
$P(Y | do(X),do(Z),C)$
\item 
$P(Y | do(X),C)$
\end{enumerate}
\end{lemma}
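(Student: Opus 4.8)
The plan is to prove all three equalities by exhibiting the relevant d-separation conditions in the appropriate mutilated graphs and invoking the rules of do-calculus, exploiting crucially the fact that $C$, the set of variables in $G_t$ that are direct causes of variables in $G_{t+1}$, forms a ``cut'' between the past (slices at times $\le t$) and the future (slices at times $> t$) once we condition on it. Since $D$ has static confounders only, there are no bidirected edges crossing between time slices, so the only paths from slice $t-\beta$ (or earlier) to slice $t+\alpha$ (or later) must pass through the directed ``$+1$'' edges emanating from $C$. Conditioning on $C$ therefore blocks every such path that is not already blocked, in the sense required below.

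First I would prove the equivalence of (1) and (3), i.e. that $Z$ can be dropped from the conditioning set. Since $Z \subseteq V_{t-\beta}$ with $\beta>0$ and $Y \subseteq V_{t+\alpha}$ with $\alpha>0$, in the graph $G_{\overline{X}}$ every path between $Y$ and $Z$ must cross from the past side to the future side of the cut defined by $C$; because confounders are static, each such path goes through at least one node of $C$ along a chain $\cdots \to c \to \cdots$ or a fork is impossible across the cut, so every path is a chain or has a non-collider in $C$. Hence conditioning on $X,C$ we have $(Y \perp Z \mid X, C)_{G_{\overline{X}}}$, and rule 1 of do-calculus gives $P(Y \mid do(X), Z, C) = P(Y \mid do(X), C)$, which is exactly (1)$=$(3). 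The one subtlety to check is colliders: a path from the past to the future could in principle be unblocked by conditioning on a descendant of a collider; but any collider on such a path, together with its descendants, lies on one side of the cut only when $C$ separates properly — I would argue that since $C$ contains \emph{all} direct causes of $G_{t+1}$, any node reachable from the future without passing through $C$ is itself in the future, so no descendant of a ``future'' collider lies in $X \cup C$. This is the step I expect to be the main obstacle: making the ``$C$ is a separating cut'' argument fully rigorous in the presence of confounder (bidirected) edges and colliders, rather than just intuitively.

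Next I would prove the equivalence of (2) and (3), i.e. that intervening on $Z$ has no effect given that we condition on $C$. Here I would apply rule 3 of do-calculus: $P(Y \mid do(X), do(Z), C) = P(Y \mid do(X), C)$ provided $(Y \perp Z \mid X, C)_{G_{\overline{X},\overline{Z(C)}}}$, where $Z(C)$ is the set of $Z$-nodes that are not ancestors of $C$ in $G_{\overline{X}}$. Since $Z$ sits in slice $t-\beta$, strictly before $C$ in slice $t$, and causal edges point forward in time, in fact $Z$ nodes \emph{can} be ancestors of $C$, so $Z(C)$ may be a strict subset of $Z$; nonetheless the same cut argument applies: after removing incoming edges to $X$ and outgoing edges from $Z(C)$, any remaining path from $Y$ to $Z$ still must cross the $C$-cut and is therefore blocked by conditioning on $C$. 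Alternatively, and perhaps more cleanly, I would derive (2)$=$(3) from (1)$=$(3) together with Lemma~\ref{lem:futureact}-style reasoning: once $Z$ is ``upstream'' of the conditioning set $C$ that already screens off the future, forcing $Z$ versus observing $Z$ makes no difference to $Y$. I would present whichever of these two routes turns out to require the fewest auxiliary claims.

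Finally I would note that (1)$=$(3) and (2)$=$(3) together give the full chain (1)$=$(2)$=$(3), completing the proof. Throughout, the key recurring lemma-level fact I would isolate and state once is: \emph{in a DCN with static confounders, conditioning on $C$ renders any $W \subseteq V_{\le t}$ d-separated from any $W' \subseteq V_{> t}$ in $G_{\overline{X}}$ (and in any further edge-deletion subgraph obtained by removing outgoing edges from past nodes)}; all three equalities then follow by a single application of the appropriate do-calculus rule. Isolating this fact is also where the static-confounder hypothesis is used essentially — with a dynamic confounder bridging slice $t$ and slice $t+1$, the cut property fails and none of the three equalities need hold.
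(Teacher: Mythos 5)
Your proposal is correct and follows essentially the same route as the paper, whose entire proof is the one-line observation that conditioning on $C$ d-separates $Y$ from $Z$ in the relevant mutilated graphs, after which rules 1, 2 and 3 of do-calculus give the pairwise equalities. The one place you hesitate --- colliders --- has a cleaner resolution than the one you sketch: any path from $Z$ to $Y$ must traverse an inter-slice edge $c \to d$ with $c \in C$, and $c$ is then a non-collider on that path (one incident edge points out of $c$), so the path is blocked at $c$ regardless of which colliders elsewhere happen to be activated by conditioning on $X \cup C$.
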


\begin{proof}
By the graphical structure of a DCN with static confounders, conditioning on $C$ d-separates $Y$ from $Z$.
The three rules of do-calculus apply, and (1) equals (3) by rule 1, (1) equals (2) by rule 2, and also (2) equals (3) by rule 3.
\qed\end{proof}

In our example, we want to predict the traffic flow $Y$ in two days caused by traffic control mechanisms applied tomorrow $X$, and conditioned on the traffic delay today $C$. Any traffic controls $Z$ applied before today are irrelevant, because their impact is already accounted for in $C$.

\begin{lemma}[Future observations]\label{lem:futureobs}
Let $D$, $X$ and $C$ be as in Lemma~\ref{lem:past}. Let $Y \subseteq V_{t-\alpha}$ and $Z \subseteq V_{t+\beta}$, with $\alpha > 0$ and $\beta > 0$, then:
$$
P(Y|do(X),Z,C)=P(Y|do(X),C)
$$
\end{lemma}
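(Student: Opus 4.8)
The plan is to recognize that this statement is the mirror image of Lemma~\ref{lem:past} and to reduce it to a single application of Rule~1 of do-calculus. Rule~1 gives $P(Y|do(X),Z,C)=P(Y|do(X),C)$ as soon as $(Y\perp Z\mid X,C)_{\hat D_{\overline X}}$, and since this d-separation statement is symmetric in $Y$ and $Z$ it is exactly the same graphical fact that underlies Lemma~\ref{lem:past}; here we need only this one rule, not Rules~2 and~3. So the whole proof reduces to checking that $Y$ and $Z$ are d-separated by $X\cup C$ in the graph $\hat D_{\overline X}$.

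To check this, I would first record the two structural facts available in a DCN with static confounders. First, because causes precede their effects and $D$ has no cross-slice confounders, every edge of $\hat D$ (directed or bidirected) joins two nodes of the same slice or joins a node of slice $s$ to a node of slice $s+1$. Second, by the very definition of $C$, every edge going from slice $t$ to slice $t+1$ has its tail in $C$. Passing to $\hat D_{\overline X}$ only deletes some edges, so both facts persist. Now take any path $\pi$ in $\hat D_{\overline X}$ between a node of $Y\subseteq V_{t-\alpha}$ and a node of $Z\subseteq V_{t+\beta}$. Since $\alpha,\beta>0$, one endpoint is strictly before slice $t+1$ and the other strictly after slice $t$, so $\pi$ must contain an edge crossing from slice $t$ to slice $t+1$; let $c\in C$ be the tail of the first such crossing edge, which by the structural facts is directed out of $c$. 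The edge of $\pi$ leaving $c$ toward the future therefore points away from $c$, so $c$ is not a collider on $\pi$ (its other incident edge on $\pi$ makes it either a chain vertex or a fork). As $c\in C$ lies in the conditioning set, $\pi$ is blocked at $c$. Since $\pi$ was arbitrary, $(Y\perp Z\mid X,C)_{\hat D_{\overline X}}$ holds, and Rule~1 yields the claim.

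The step I expect to be delicate is the collider bookkeeping in the last argument, and it is precisely there that the two hypotheses on $D$ are used: the arrow-of-time orientation forces the boundary-crossing edge out of $c$, and the staticness of the confounders is what guarantees that the $t/(t{+}1)$ boundary can only be crossed by such a directed forward edge — if cross-slice confounders were allowed, the crossing vertex could be a conditioned-on collider and conditioning on $C$ would \emph{open} the path rather than block it. One should also dispatch the harmless degenerate case $c\in X$: then the incoming edges of $c$ are absent in $\hat D_{\overline X}$, so $\pi$ must use two edges out of $c$, making $c$ a fork, and the conclusion is unchanged.
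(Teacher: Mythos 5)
Your proof is correct and follows essentially the same route as the paper's: the paper's own argument is exactly ``conditioning on $C$ d-separates $Y$ from $Z$, hence Rule~1 applies,'' which is your reduction. The only difference is that you actually verify the d-separation claim (every $Y$--$Z$ path must cross the $t/(t{+}1)$ boundary through a forward-directed edge whose tail lies in $C$ and is a non-collider), a detail the paper asserts without proof; your observation that static confounders are what rule out a conditioned-on collider at the boundary is exactly the right point.
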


\begin{proof}
By the graphical structure of a DCN with static confounders, conditioning on $C$ d-separates $Y$ from $Z$ and the expression is valid by rule 1 of do-calculus.
\qed\end{proof}

In our example, observing the travel delay today makes observing the future traffic flow irrelevant to evaluate yesterday''s traffic flow.

\begin{lemma}
\label{lem:Tafter}
Let $t > t_x$. Then 
$P(V_{t+1}|do(X)) = T\,P(V_{t}|do(X))$.
Namely, transition probabilities
are not affected by intervention more than one time unit 
in the past. 
\end{lemma}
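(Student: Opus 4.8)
The plan is to reduce the claim to the single-step transition conditional and then show, via do-calculus, that this conditional is untouched by an intervention lying strictly in the past. First I would apply Lemma~\ref{lem:A} with $\alpha = 1$: writing $A$ for the matrix with entries $A_{ij} = P(V_{t+1} = v_j \mid V_t = v_i, do(X))$, that lemma gives $P(V_{t+1}\mid do(X)) = A\,P(V_t\mid do(X))$ as soon as $P(V_{t+1}\mid V_t, do(X))$ is identifiable. So it is enough to prove
\[
P(V_{t+1}\mid V_t, do(X)) \;=\; P(V_{t+1}\mid V_t)
\qquad\text{for every } t > t_x ,
\]
since the right-hand side is do-free (hence identifiable) and, by the definition of the transition matrix, its $(v_i,v_j)$ entry is exactly $T_{ij}$. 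This identity says precisely that $A = T$, and then Lemma~\ref{lem:A} delivers $P(V_{t+1}\mid do(X)) = T\,P(V_t\mid do(X))$, which is the claim.

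For the displayed identity I would use rule~3 of do-calculus to strip the $do(X)$, for which it suffices to check the d-separation $(V_{t+1} \perp X \mid V_t)$ in $\hat D$ (it then also holds in the subgraph that rule~3 actually refers to, so the $X(W)$ qualification causes no trouble). This is where being restricted to static confounders is used. In the recurrent-time-slice representation adopted in this section, every directed edge joins nodes whose slice indices differ by at most one, and --- since there are no confounders between different slices --- every bidirected edge joins two nodes within the same slice. Hence any path in $\hat D$ from a node $x \in X \subseteq V_{t_x}$ to a node $y \in V_{t+1}$ must visit slice $V_t$, because $t_x \le t-1$ and the slice index moves by at most one along each edge. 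Let $\tilde c$ be the last node of slice $t$ met on such a path as it is traversed from $x$ to $y$. After $\tilde c$ the path never returns to slice $t$ (by choice of $\tilde c$) and cannot descend to slice $t-1$ without later crossing slice $t$ again, so the node following $\tilde c$ lies in slice $t+1$; consequently the edge out of $\tilde c$ towards $y$ is directed away from $\tilde c$ (it cannot be bidirected, as that would link slices $t$ and $t+1$). Therefore $\tilde c$ is not a collider on the path, and since $\tilde c \in V_t$ is in the conditioning set, the path is blocked at $\tilde c$. As every $x$--$y$ path is blocked, $(V_{t+1} \perp X \mid V_t)$ holds, rule~3 applies, and the displayed identity follows.

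Combining the two steps finishes the proof. The main obstacle is the d-separation argument of the second paragraph: because we condition on the whole slice $V_t$, one must rule out that some node of $V_t$ sits on an $X$-to-$V_{t+1}$ path as an active collider, and the ``last slice-$t$ node'' observation is exactly what does this. It is precisely here that the two structural features of static confounders --- bounded temporal span of edges and absence of between-slice confounder links --- enter, which is also why this lemma, like Lemmas~\ref{lem:past} and~\ref{lem:futureobs}, is stated for DCNs with static confounders rather than for arbitrary DCNs.
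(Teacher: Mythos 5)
Your proof is correct and takes essentially the same route as the paper: reduce the claim via Lemma~\ref{lem:A} to showing $P(V_{t+1}\mid V_t, do(X)) = P(V_{t+1}\mid V_t)$, and establish this from the d-separation of $X$ and $V_{t+1}$ given $V_t$ (the paper strips the $do$ via rule~2 and then drops the observation, whereas you apply rule~3 directly, but both rest on the same separation). Your explicit ``last slice-$t$ node is a non-collider'' argument is in fact a more careful justification of the d-separation that the paper merely asserts.
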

\begin{proof}
$P(V_{t+1}|do(X)) = T'\,P(V_{t}|do(X))$ where the elements of $T'$ are $P(V_{t+1}|V_t, do(X))$. As $V_{t}$ includes all variables in $G_{t}$ that are direct causes of variables in $G_{t+1}$, conditioning on $V_{t}$ d-separates $X$ from $V_{t+1}$. By Lemma~\ref{lem:past} we exchange the action $do(X)$ by the observation $X$ and so $P(V_{t+1}|V_t, do(X)) = P(V_{t+1}|V_t, X)$. Moreover, $V_t$ d-separates $X$ from $V_{t+1}$, so they are statistically independent given $V_t$. Therefore, $P(V_{t+1}|V_t, do(X)) = P(V_{t+1}|V_t, X) = P(V_{t+1}|V_t)$ which are the elements of matrix $T$ as required.
\qed\end{proof}

\begin{theorem}
\label{thm:static}
Let $D$ be a DCN with static confounders, and transition matrix $T$.
Let $X\subseteq V_{t_x}$ and $Y\subseteq V_{t_y}$ for two time points $t_x < t_y$.
If the expression $P(V_{t_x+1}|V_{t_x-1}, do(X))$ is identifiable with corresponding transition matrix $A$, then $P(Y|do(X))$ is identifiable and 
$$P(Y|do(X))=\sum_{V_{t_y}\setminus Y} T^{t_y-(t_x+1)}AT^{t_x-1-t_0}P(V_{t_0}).$$
\end{theorem}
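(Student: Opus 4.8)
The plan is to decompose the time interval from $t_0$ to $t_y$ into three stretches and apply one of the earlier lemmas to each. First, for the stretch from $t_0$ up to $t_x-1$, the intervention on $X \subseteq V_{t_x}$ lies in the future, so by Lemma~\ref{lem:Tbefore} we have $P(V_{t_x-1}|do(X)) = T^{t_x-1-t_0}P(V_{t_0})$. Second, for the single step from $t_x-1$ across the intervened slice, the hypothesis gives us the identifiable expression $P(V_{t_x+1}|V_{t_x-1},do(X))$ with transition matrix $A$; by Lemma~\ref{lem:A} (applied with $t = t_x-1$ and $\alpha = 2$) this yields $P(V_{t_x+1}|do(X)) = A\,P(V_{t_x-1}|do(X))$. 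Third, for the stretch from $t_x+1$ onward to $t_y$, the intervention is now strictly more than one time unit in the past, so Lemma~\ref{lem:Tafter} applies repeatedly: by induction on $t$ for $t_x < t < t_y$ we get $P(V_{t_y}|do(X)) = T^{t_y-(t_x+1)}P(V_{t_x+1}|do(X))$.

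Chaining these three equalities gives
\[
P(V_{t_y}|do(X)) = T^{t_y-(t_x+1)}\,A\,T^{t_x-1-t_0}\,P(V_{t_0}),
\]
and finally $P(Y|do(X))$ is obtained by marginalizing out the variables in $V_{t_y}$ that are not in $Y$, i.e. summing over $V_{t_y}\setminus Y$, which produces exactly the claimed formula. Identifiability of $P(Y|do(X))$ follows because every factor in this product is either a known matrix ($T$, $P(V_{t_0})$) or the identified matrix $A$, and the remark in Section~\ref{sec:id} on handling the extra conditioning variable lets us reduce the identifiability of $P(V_{t_x+1}|V_{t_x-1},do(X))$ to two ordinary ID-algorithm calls, which is covered by the hypothesis.

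I expect the main subtlety — rather than a hard obstacle — to be the bookkeeping that justifies invoking Lemma~\ref{lem:Tafter} and Lemma~\ref{lem:past} here: one must check that $V_{t_x-1}$ and each intermediate $V_t$ really does contain the full set $C$ of variables that are direct causes of the next slice, so that conditioning on the whole slice d-separates the past (including $X$) from the future slice. Since in our recurrent-time-slice representation a single slice by construction carries all such cross-slice causes, this holds, but it is the step where the static-confounder assumption is genuinely used (there are no confounder edges bridging slices, so the d-separation is not spoiled by a bidirected arc). The remaining work — the induction in the third stretch and the case-by-case justification that composing conditional transition matrices corresponds to matrix multiplication — is routine and already encapsulated in Lemmas~\ref{lem:Tbefore}, \ref{lem:A}, and \ref{lem:Tafter}.
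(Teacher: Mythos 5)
Your proposal is correct and follows essentially the same three-stage decomposition as the paper's own proof: Lemma~\ref{lem:Tbefore} before the intervention, Lemma~\ref{lem:A} across it, Lemma~\ref{lem:Tafter} iterated afterwards, then marginalization over $V_{t_y}\setminus Y$. Your closing remark correctly locates where the static-confounder assumption does real work (the d-separation underlying Lemma~\ref{lem:Tafter}), which the paper defers to that lemma's proof rather than restating here.
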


\begin{proof} 
Applying Lemma~\ref{lem:Tbefore}, we obtain that
$P(V_{t_x-1}|do(X)) = T^{t_x-1-t_0}P(V_{t_0})$.
We  assumed that $P(V_{t_x+1}|V_{t_x-1}, do(X))$ is identifiable, 
therefore Lemma~\ref{lem:A} guarantees that 
$$P(V_{t_x+1}|do(X)) = A\,P(V_{t_x-1}|do(X)) = A\,T^{t_x-1-t_0}P(V_{t_0}).$$
Finally, $P(V_{t_y}|do(X)) = T^{(t_y-(t_x+1))} P(V_{t_x+1}|do(X))$ by repeatedly applying Lemma~\ref{lem:Tafter}.
$P(Y|do(X))$ is obtained by marginalizing variables in $V_{t_y}\setminus Y$ in the resulting expression $T^{t_y-(t_x+1)}AT^{t_x-1-t_0}P(V_{t_0})$.
\qed\end{proof}

As a consequence of Theorem~\ref{thm:static}, causal identification of $D$ reduces to the problem of identifying
the expression $P(V_{t_x+1}|V_{t_x-1},do(X))$. 
The ID algorithm can be used to check whether this expression is identifiable and, if it is, compute its joint probability from observed data.

Note that Theorem~\ref{thm:static} holds without the assumption of transition matrix time-invariance by replacing powers of $T$ with products of matrices $T_t$.

\begin{figure}
\begin{center}
\includegraphics[width=2cm]{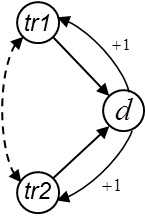}
\end{center}
\caption{Dynamic Causal Network where $tr1$ and $tr2$ have a common unobserved cause, a {\em confounder}. Since both variables are in the same time slice, we call it a {\em static confounder}.}

\label{fig:dcn_confounder_compact}
\end{figure}

\subsubsection{DCN-ID Algorithm for DCNs with Static Confounders}
\label{sec:static_algo}

The DCN-ID algorithm for DCNs with static confounders is given in Figure~\ref{fig:static-algo}. 
Its soundness is immediate from Theorem~\ref{thm:static}, the soundness of the ID algorithm \cite{shpitser2006identification}, and Lemma~\ref{lem:graphID}.

\begin{theorem}[Soundness] Whenever DCN-ID returns a distribution for  $P(Y|do(X))$, it is correct.\ \qed
\end{theorem}

Observe that line 2 of the algorithm calls ID with a graph of size $4|G|$. By the remark of Section~\ref{sec:id}, this means two calls but notice that in this case we can spare the call for the ``denominator'' $P(V_{t_x-1}|do(X))$ because Lemma~\ref{lem:futureact} guarantees $P(V_{t_x-1}|do(X)) = P(V_{t_x-1})$. Computing transition matrix A on line 3 has complexity $O((4k)^{(b+2)})$, where $k$ is the number of variables in one time slice and $b$ the number of bits encoding each variable.
The formula on line 4 is the multiplication of $P(V_{t_0})$ by $n=(t_y-t_0)$ matrices, which has complexity $O(n.b^2)$. To solve the same problem with the ID algorithm would require running it on the entire graph of size $n|G|$ and evaluating the resulting joint probability with complexity $O((n.k)^{(b+2)})$ compared to $O((4k)^{(b+2)}+n.b^2)$ with DCN-ID.

If the problem we want to solve is evaluating the trajectory of the system over time 
$$(P(V_{t_x+1}),P(V_{t_x+2}),P(V_{t_x+3}),...P(V_{t_x+n}))$$ 
after an intervention at time slice $t_x$, with ID we would need to run ID $n$ times and evaluate the $n$ outputs with overall complexity $O((k)^{(b+2)}+(2k)^{(b+2)}+(3k)^{(b+2)}+...+(n.k)^{(b+2)})$. Doing the same with DCN-ID requires running ID one time to identify $P(V_{t_x+1})$, evaluating the output and applying successive transition matrix multiplications to obtain the joint probability of the time slices thereafter, with resulting complexity $O((4k)^{(b+2)}+n.b^2)$.

\begin{figure}[ht]
\hrule\medskip
Function \textbf{DCN-ID}($Y$,$t_y$, $X$,$t_x$, $G$,$C$,$T$,$P(V_{t_0})$)

INPUT: 
\begin{itemize}
\item DCN defined by a causal graph $G$ 
on a set of variables $V$ and a set $C \subseteq V \times V$ describing causal relations from $V_t$ to $V_{t+1}$ for every $t$
\item transition matrix $T$ for $G$ derived
from observational data
\item a set $Y$ included in $V_{t_y}$
\item a set $X$ included in $V_{t_x}$
\item distribution $P(V_{t_0})$ at the initial state, 
\end{itemize}

OUTPUT: The distribution  $P(Y|do(X))$, or else FAIL

\begin{enumerate}
\item let $G'$ be the acyclic graph formed by joining $G_{t_x-2}$, $G_{t_x-1}$, $G_{t_x}$, and $G_{t_x+1}$
by the causal relations given by $C$;
\item run the standard ID algorithm for expression $P(V_{t_x+1}|V_{t_x-1},do(X))$ on $G'$; if it returns FAIL, return FAIL;
\item else, use the resulting distribution to compute the transition matrix $A$, where $A_{ij} = P(V_{t_x+1}=v_i|V_{t_x-1}=v_j, do(X))$;
\item return $\sum_{V_{t_y}\setminus Y} T^{t_y-(t_x+1)}\,A\,T^{t_x-1-t_0}\,P(V_{t_0})$;
\end{enumerate}

\caption{The DCN-ID algorithm for DCNs with static confounders}
\label{fig:static-algo}
\medskip\hrule
\end{figure}

\subsection{DCNs with Dynamic Confounders}
\label{sec:dynamic}
We now discuss the case of DCNs with dynamic confounders, that is, with confounders that influence variables in consecutive time slices.

The presence of dynamic confounders d-connects time slices, and we will see in the following lemmas how this may be an obstacle for the identifiability of the DCN.

In the presence of dynamic confounders, Lemma~\ref{lem:Tafter} does no longer hold since d-separation is no longer guaranteed. As a consequence, we cannot guarantee the DCN will recover its ``natural'' (non-interventional) transition probabilities from one cycle to the next after the intervention is performed.

Our statement of the identifiability theorem for DCNs with dynamic confounders is weaker and includes in its assumptions those conditions that can no longer be guaranteed.

\begin{theorem}
\label{thm:dynamic}
Let $D$ be a DCN with dynamic confounders. Let $T$ be its transition matrix under no interventions. We further assume that:
\begin{enumerate}
\item $P(V_{t_x+1}|V_{t_x-1}, do(X))$ is identifiable by matrix $A$ 
\item For all
$t > t_x+1$, $P(V_{t}|V_{t-1}, do(X))$ is identifiable by matrix $M_t$
\end{enumerate}
Then $P(Y|do(X)$ is identifiable and computed by
\[P(Y|do(X))=\sum_{V_{t_y}\setminus Y} \left[\prod\limits_{t=t_x+2}^{t_y} M_t\right]\,A\,T^{t_x-1-t_0}P(V_{t_0}).\]
\end{theorem}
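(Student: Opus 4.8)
The plan is to follow the same skeleton as the proof of Theorem~\ref{thm:static}, but to replace each appeal to Lemma~\ref{lem:Tafter} (which fails under dynamic confounders) by the corresponding hypothesis of the theorem. First I would handle the segment of time strictly before the intervention: by Lemma~\ref{lem:futureact}(2), $P(V_t|do(X)) = P(V_t)$ for all $t$ with $t_0 \le t < t_x$, and iterating the definition of the (non-interventional) transition matrix $T$ gives $P(V_{t_x-1}|do(X)) = T^{t_x-1-t_0} P(V_{t_0})$, exactly as in Lemma~\ref{lem:Tbefore}. This step is unaffected by the presence of dynamic confounders, since it only concerns times before $t_x$ and future actions never propagate backwards.

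Next I would use assumption (1): since $P(V_{t_x+1}|V_{t_x-1}, do(X))$ is identifiable with transition matrix $A$, Lemma~\ref{lem:A} (applied with $t = t_x - 1$, $\alpha = 2$) yields
\[
P(V_{t_x+1}|do(X)) = A\, P(V_{t_x-1}|do(X)) = A\, T^{t_x-1-t_0} P(V_{t_0}).
\]
Then, for the segment after the intervention, I would proceed by induction on $t$ from $t_x+1$ up to $t_y$: assumption (2) says each $P(V_t|V_{t-1}, do(X))$ is identifiable by a matrix $M_t$, so a second application of Lemma~\ref{lem:A} (with $\alpha = 1$ at each step) gives $P(V_t|do(X)) = M_t\, P(V_{t-1}|do(X))$. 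Chaining these from $t = t_x+2$ to $t = t_y$ produces $P(V_{t_y}|do(X)) = \left[\prod_{t=t_x+2}^{t_y} M_t\right] A\, T^{t_x-1-t_0} P(V_{t_0})$, where the product is taken in the order $M_{t_y} M_{t_y-1} \cdots M_{t_x+2}$. Finally, marginalizing out $V_{t_y}\setminus Y$ gives the claimed formula, and identifiability of $P(Y|do(X))$ follows because every matrix appearing in the product ($T$ from observational data, $A$ and the $M_t$ by hypothesis) is computable from observational quantities.

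The conceptual point — and what must be flagged rather than proved — is precisely why the hypotheses (1) and (2) are needed here whereas they came for free in Theorem~\ref{thm:static}. In the static case, Lemma~\ref{lem:Tafter} showed that conditioning on a full time slice $V_t$ d-separates the past intervention $X$ from $V_{t+1}$, so the post-intervention one-step transition matrix reverts to the observational $T$; with dynamic confounders, a bidirected edge spanning consecutive slices means $V_t$ no longer d-separates $X$ from $V_{t+1}$, so this reversion argument breaks and we must instead assume each $M_t$ is identifiable (and they may genuinely differ from $T$ and from each other). The main obstacle in writing the proof cleanly is therefore bookkeeping rather than any deep step: one must be careful that the matrices $A$ and $M_t$ are well-defined as claimed (each entry is a genuine conditional distribution over a full slice, which requires the "extra $Z$" identity from the Remark in Section~\ref{sec:id} together with assumptions (1)–(2)), and that the induction correctly composes them in the right order, with the $T$-power correctly accounting for the $t_x-1-t_0$ pre-intervention steps and the jump of length two across $t_x$ absorbed into $A$.
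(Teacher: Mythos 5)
Your proposal is correct and follows essentially the same route as the paper's proof: Lemma~\ref{lem:Tbefore} for the pre-intervention segment, Lemma~\ref{lem:A} with assumption (1) for the jump across $t_x$, iterated application of assumption (2) to chain the $M_t$ matrices, and a final marginalization over $V_{t_y}\setminus Y$. Your added remarks on matrix ordering, the role of the ``extra $Z$'' identity, and why Lemma~\ref{lem:Tafter} fails under dynamic confounders are consistent with the paper's surrounding discussion and do not change the argument.
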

\begin{proof} 
Similar to the proof of Theorem~\ref{thm:static}. By Lemma~\ref{lem:Tbefore}, we can compute the distribution up to time $t_x-1$ as 
$$P(V_{t_x-1}|do(X)) = T^{t_x-1-t_0}P(V_{t_0}).$$ 
Using the first assumption in the statement of the theorem, by Lemma~\ref{lem:A} we obtain
$$P(V_{t_x+1}|do(X)) = A\,T^{t_x-1-t_0}P(V_{t_0}).$$
Then, we compute the final $P(V_{t_y}|do(X))$ using the matrices $M_t$ from the statement of the theorem that allows us to compute probabilities for subsequent time-slices. Namely, 
\begin{align*}
P(V_{t_x+2}|do(X)) &= M_{t_x+2}\,A\,T^{t_x-1-t_0}P(V_{t_0}), \\
P(V_{t_x+3}|do(X)) &= M_{t_x+3}\,M_{t_x+2}\,A\,T^{t_x-1-t_0}P(V_{t_0}),
\end{align*}
and so on until we find
\[P(V_{t_y}|do(X)) = \left[\prod\limits_{t=t_x+2}^{t_y} M_t\right]\,A\,T^{t_x-1-t_0}P(V_{t_0}).\]
Finally, the do-free expression of $P(Y|do(X))$ is obtained by marginalization over variables of $V_{t_y}$ not in $Y$.
\qed\end{proof}

Again, note that Theorem~\ref{thm:dynamic} holds without the assumption of transition matrix time-invariance by replacing powers of $T$ with products of matrices $T_t$.

\subsubsection{DCN-ID Algorithm for DCNs with Dynamic Confounders}
\label{sec:dynamic_algo}
\begin{figure}[H]
\hrule\medskip
Function \textbf{DCN-ID}($Y$,$t_y$, $X$,$t_x$, $G$,$C$,$C'$,$T$,$P(V_{t_0})$)

INPUT: 
\begin{itemize}
\item DCN defined by a causal graph $G$ 
on a set of variables $V$ and a set $C \subseteq V \times V$ describing causal relations from $V_t$ to $V_{t+1}$ for every $t$, and a set $C' \subseteq V \times V$ describing confounder relations from $V_t$ to $V_{t+1}$ for every $t$
\item transition matrix $T$ for $G$ derived
from observational data
\item a set $Y$ included in $V_{t_y}$
\item a set $X$ included in $V_{t_x}$
\item distribution $P(V_{t_0})$ at the initial state, 
\end{itemize}

OUTPUT: The distribution  $P(Y|do(X))$, or else FAIL

\begin{enumerate}
\item let $G'$ be the acyclic graph formed by joining $G_{t_x-2}$, $G_{t_x-1}$, $G_{t_x}$, and $G_{t_x+1}$
by the causal relations given by $C$ and confounders given by $C'$;
\item run the standard ID algorithm for expression $P(V_{t_x+1}|V_{t_x-1},do(X))$ on $G'$; if it returns FAIL, return FAIL;
\item else, use the resulting distribution to compute the transition matrix $A$, where $A_{ij} = P(V_{t_x+1}=v_i|V_{t_x-1}=v_j, do(X))$;
\item for each $t$ from $t_x+2$ up to $t_y$:
	\begin{enumerate}
    \item let $G''$ be the causal graph composed of time slices $G_{t_x-1}$, $G_{t_x}$, \dots, $G_{t}$
	\item run the standard ID algorithm on $G''$ for the expression $P(V_t|V_{t-1},do(X))$; if it returns FAIL, return FAIL;
    \item else, use the resulting distribution to compute the transition matrix $M_t$, where $(M_t)_{ij} = P(V_{t}=v_i|V_{t-1}=v_j, do(X))$;
	\end{enumerate}
\item return $\sum_{V_{t_y}\setminus Y} \left[\prod\limits_{t=t_x+2}^{t_y} M_t\right]\,A\,T^{t_x-1-t_0}P(V_{t_0})$;
\end{enumerate}

\caption{The DCN-ID algorithm for DCNs with dynamic confounders}
\label{fig:dynamic-algo}
\medskip\hrule
\end{figure}

The DCN-ID algorithm for DCNs with dynamic confounders  is given in Figure~\ref{fig:dynamic-algo}.

Its soundness is immediate from Theorem~\ref{thm:dynamic}, the soundness of the ID algorithm \cite{shpitser2006identification}, and Lemma~\ref{lem:graphID}.

\begin{theorem}[Soundness] Whenever DCN-ID returns a distribution for  $P(Y|do(X))$, it is correct.\ \qed
\end{theorem}

Notice that this algorithm is more expensive than the DCN-ID algorithm for DCNs with static confounders. In particular, it requires $(t_y - t_x)$ calls to the ID algorithm with increasingly larger chunks of the DCN. To identify a single future effect $P(Y|do(X)$ it may be simpler to invoke Lemma~\ref{lem:graphID} and do a unique call to the ID algorithm for the expression $P(Y|do(X)$ restricted to the causal graph formed by time-slices $G_{t_x-1}$, ..., $G_{t_y}$. However, to predict the trajectory of the system over time after an intervention, the DCN-ID algorithm for dynamic confounders directly identifies the post-intervention transition matrix and its evolution. A system characterized by a time-invariant transition matrix before the intervention will be characterized by a time dependent transition matrix, given by the DCN-ID algorithm, after the intervention. This dynamic view offers opportunities for the analysis of the time evolution of the system, and conditions for convergence to a steady state.

\section{Complete DCN Identifiability}
\label{sec:complete_dcn}

In this section we show that the identification algorithms as formulated in previous sections are not complete, and we develop complete algorithms for complete identification of DCNs. To prove completeness we use previous results \cite{shpitser2006identification}. It is shown there that the absence of a structure called 'hedge' in the graph is a sufficient and necessary condition for identifiability. We first define some graphical structures that lead to the definition of hedge, in the context of DCNs.

\begin{definition}[C-component]
Let $D$ be a DCN. Any maximal subset of variables of $D$ connected by bidirected edges (confounders) is called a C-component.
\end{definition}

\begin{definition}[C-forest]
Let $D$ be a DCN and $C$ a C-component of $D$. If all variables in $C$ have at most one child, then $C$ is called a C-forest. The set $R$ of variables in $C$ that have no descendants is called the C-forest root, and the C-forest is called $R$-rooted.
\end{definition}

\begin{definition}[Hedge]
\label{def:hedge}
Let $X$ and $Y$ be sets of variables in $D$. Let $F$ and $F'$ be two $R$-rooted C-forests such that $F'\subseteq F$, $F\cap X \neq \emptyset$, $F'\cap X = \emptyset$, $R\subset An(Y)_{D_{\bar{X}}}$. Then $F$ and $F'$ form a Hedge for $P(Y|do(X))$ in $D$.
\end{definition}

Notice that $An(Y)_{D_{\bar{X}}}$ refers to those variables that are ancestors of $Y$ in the causal network $D$ where incoming edges to $X$ have been removed. We may drop the subscript as in $An(Y)$ in which case we are referring to the ancestors of $Y$ in the unmodified network $D$ (in which case, the network we refer to should be clear from the context).
Moreover, we overload the definition of the ancestor function and we use $An(Z,V)$ to refer to the ancestors of the \emph{union} of sets $Z$ and $V$, that is, $An(Z,V) = An(Z \cup V)$.

The presence of a hedge prevents the identifiability of causal graphs \cite{shpitser2006identification}. Also any non identifiable graph necessarily contains a hedge. These results applied to DCNs lead to the following lemma.

\begin{lemma}[DCN complete identification]
\label{lem:dcn_complete}
Let $D$ be a DCN with confounders. Let $X$ and $Y$ be sets of variables in $D$. $P(Y|do(X))$ is identifiable iif there is no hedge in $D$ for $P(Y|do(X))$.
\end{lemma}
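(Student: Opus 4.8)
The plan is to obtain the statement as the transfer to the dynamic setting of the hedge criterion of \cite{shpitser2006identification}, which characterises identifiability in ordinary semi-Markovian causal graphs. The first observation is that a DCN, once replaced by its dependency-equivalent projection, is formally exactly a semi-Markovian causal graph: the vertices are the observed variables $V_{k,t}$, the directed edges are the dependencies $pa(V_{k,t})\to V_{k,t}$, and the bidirected edges are the confounders. The only discrepancy with the setting of \cite{shpitser2006identification} is that the unrolled graph $\hat D$ is biinfinite, so the theorem cannot be quoted verbatim and the argument must be confined to a finite portion of $\hat D$; this is exactly the role of Lemma~\ref{lem:graphID}.

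For ``$P(Y|do(X))$ not identifiable $\Rightarrow$ hedge in $D$'' I would argue as follows. By Lemma~\ref{lem:graphID} (whose proof in fact establishes the equivalence in both directions, since it shows that the identification of $P(Y|do(X))$ in $D$ can be carried out using only the structure of $G'$), identifiability in $D$ is equivalent to identifiability in the finite graph $G'$ obtained from the time slices between $t_x$ and $t_y$ together with the preceding one. As $G'$ is a finite semi-Markovian graph, the completeness half of \cite{shpitser2006identification} yields a hedge $F'\subseteq F$ for $P(Y|do(X))$ inside $G'$. Because $G'$ is the induced subgraph of $\hat D$ on those slices, because directed edges never decrease the time index, and because $R\subseteq An(Y)$ with $Y\subseteq V_{t_y}$, every vertex of $F$ has exactly the same children inside $An(Y)_{D_{\bar{X}}}$ as it does in $G'$; hence $F$ and $F'$ remain $R$-rooted C-forests in $D$ and form a hedge there.

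For the converse, ``hedge in $D$ $\Rightarrow$ not identifiable'', I would apply directly the two-model construction of \cite{shpitser2006identification}: from a hedge $F'\subseteq F$ one builds two causal models that agree on every observational distribution yet disagree on $P(Y|do(X))$. That construction only assigns non-trivial structural functions to the vertices of $F$ and of $An(Y)_{D_{\bar{X}}}$, and can be completed on the remaining vertices of $\hat D$ by functions (e.g.\ deterministic copies of a fixed parent) that do not change the observational law; so $P(Y|do(X))$ is not identifiable in $D$. Combining the two implications gives the equivalence.

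The step I expect to be the main obstacle is making this finite reduction and the hedge correspondence fully rigorous in the presence of dynamic confounders. Because a dynamic confounder links consecutive slices, a C-component of $\hat D$ — and hence a candidate C-forest $F$ — can straddle arbitrarily many, indeed infinitely many, time slices toward the past, so a hedge in $\hat D$ does not obviously restrict by naive truncation to a hedge in the finite window $G'$, nor is it immediate that the two-model construction behaves well on an infinite hedge. One therefore has to show that any hedge's identification-blocking effect is already present within the bounded window singled out by Lemma~\ref{lem:graphID} — using that every hedge must meet $X\subseteq V_{t_x}$ and be rooted in $An(Y)$ with $Y\subseteq V_{t_y}$, and invoking time-invariance of the DCN to replace an over-long hedge by one living in finitely many slices. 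Once this correspondence is pinned down, the lemma follows at once.
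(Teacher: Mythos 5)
The paper does not actually prove this lemma: it states it as an immediate consequence of the hedge characterization of \cite{shpitser2006identification} (``The presence of a hedge prevents the identifiability of causal graphs. Also any non identifiable graph necessarily contains a hedge. These results applied to DCNs lead to the following lemma.''), with no argument for why the characterization, proved for finite semi-Markovian graphs, survives the passage to the biinfinite unrolled graph $\hat D$. Your proposal follows the same basic route the paper intends --- transfer the hedge criterion --- but you supply the missing reduction: confine attention to the finite window $G'$ of Lemma~\ref{lem:graphID}, apply the finite-graph theorem there, and transport hedges between $G'$ and $D$. That is the right way to substantiate the claim, and your observation that the two-model construction must be extended to the remaining vertices of $\hat D$ by observationally neutral functions is a point the paper silently assumes.

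Two caveats, both of which you partly flag yourself. First, your argument leans on the \emph{converse} of Lemma~\ref{lem:graphID} (identifiable in $G'$ $\Rightarrow$ identifiable in $D$); the lemma is stated only in the forward direction and its proof is a sketch, so asserting that it ``establishes the equivalence in both directions'' needs its own justification (a model on $D$ restricted to the slices of $G'$ is not literally a model on $G'$ without a latent-projection step). Second, the obstacle you identify --- that with dynamic confounders a C-forest, hence a hedge, could straddle unboundedly many slices, so that truncation to $G'$ is not automatic --- is genuine and is left open in your proposal (``One therefore has to show that\dots''). The paper does not resolve it either; indeed the later Lemmas~\ref{lem:hedge_static} and \ref{lem:hedge_dynamic} bound where hedges can live only under additional assumptions (static confounders, or finite dynamic time span). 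So your outline is faithful to the paper's intent and more explicit than the paper itself, but the lemma as stated for arbitrary DCNs with confounders is not fully closed by either your sketch or the paper's.
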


We can show that the algorithms presented in the previous section, in some cases introduce hedges in the sub-networks they analyze, even if no hedges existed in the original expanded network.

\begin{lemma} The DCN-ID algorithms for DCNs with static confounders (Section~\ref{sec:static}) and dynamic confounders (Section~\ref{sec:dynamic}) are not complete.
\label{lem:dcn_id_not_complete}
\end{lemma}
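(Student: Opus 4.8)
<br>

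The claim to prove is Lemma~\ref{lem:dcn_id_not_complete}: the DCN-ID algorithms for static and dynamic confounders are not complete.

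This is a negative result, so the plan is to exhibit a concrete counterexample: a DCN $D$ and an expression $P(Y_{t+\alpha}|do(X_t))$ that is identifiable in the full biinfinite graph $\hat D$ (or equivalently, per Lemma~\ref{lem:graphID}, in the bounded graph $G'$ spanning $G_{t_x-1},\dots,G_{t_y}$), but for which the DCN-ID algorithm returns FAIL. Since soundness and the structure of the algorithm mean that FAIL is returned exactly when one of the internal ID calls fails, I would construct $D$ so that the expression $P(V_{t_x+1}|V_{t_x-1},do(X))$ — evaluated on the four-slice graph $G'$ formed from $G_{t_x-2},G_{t_x-1},G_{t_x},G_{t_x+1}$ — is \emph{not} identifiable, while the original target $P(Y|do(X))$ \emph{is}. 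By Lemma~\ref{lem:dcn_complete} (and the hedge criterion of \cite{shpitser2006identification}), this amounts to engineering a DCN where conditioning the intermediate expression on all of $V_{t_x-1}$ and demanding the full joint $V_{t_x+1}$ creates a hedge that is not present — or not relevant — for the narrower query $P(Y|do(X))$.

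The key steps, in order: (1) recall that DCN-ID's answer hinges entirely on the identifiability of $P(V_{t_x+1}|V_{t_x-1},do(X))$ on $G'$ (line~2 of Figures~\ref{fig:static-algo} and \ref{fig:dynamic-algo}); so it suffices to make that single expression non-identifiable while keeping $P(Y|do(X))$ identifiable. (2) Choose $Y$ to be a small subset of a late slice $V_{t_y}$ whose ancestor set $An(Y)_{D_{\bar X}}$ is small, so that the hedge criterion for $P(Y|do(X))$ involves only a benign subgraph — e.g. $X$ has a simple, confounder-free front-door-style path to $Y$, making $P(Y|do(X))$ identifiable. (3) Simultaneously add a confounder structure (an $R$-rooted C-forest pair $F'\subseteq F$ with $F\cap X\neq\emptyset$, $F'\cap X=\emptyset$) touching other variables of $V_{t_x+1}$ or $V_{t_x-1}$ that are \emph{not} ancestors of $Y$ but \emph{are} forced into play once we ask for the entire joint $V_{t_x+1}$ conditioned on the entire $V_{t_x-1}$; because the full slice $V_{t_x+1}$ trivially contains the root of such a hedge, $P(V_{t_x+1}|V_{t_x-1},do(X))$ fails. (4) Verify, using C-component factorization and the hedge test, that no hedge exists for $P(Y|do(X))$ in $\hat D$, and invoke Lemma~\ref{lem:graphID} to confine the check to a finite graph. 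A minimal instance built on the running traffic example — say with an added static confounder between a peripheral variable in slice $t_x-1$ and one in slice $t_x+1$, plus an intervention $X$ whose effect on $Y=d_{t_x+1}$ nonetheless routes through an identifiable sub-structure — should work; the static-confounder construction also serves for the dynamic case since a static confounder is a degenerate dynamic one, or a separate small dynamic-confounder instance can be given analogously.

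The main obstacle is the verification half of step (4): showing the counterexample graph has \emph{no} hedge for $P(Y|do(X))$ anywhere in $\hat D$. Because $\hat D$ is biinfinite, one must argue (via Lemma~\ref{lem:graphID}, rule 3 discarding non-ancestors, and C-component factorization) that only finitely many slices matter and then exhaustively rule out hedges in that finite fragment; getting the confounder placement tight enough that it poisons the wide intermediate query but not the narrow target query requires care, since any confounder edge that happens to land on an ancestor of $Y$ would break identifiability of the target too. The secondary obstacle is purely expository: presenting the counterexample graph clearly (ideally with a figure) and walking through both the positive identification of $P(Y|do(X))$ and the negative ID-algorithm trace on $G'$.
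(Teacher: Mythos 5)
Your proposal is correct and follows essentially the same route as the paper: the paper's proof also constructs a DCN containing an $R$-rooted C-forest pair $F'\subseteq F$ whose root $R$ is \emph{not} an ancestor of $Y$ (so no hedge exists for $P(Y|do(X))$, which is therefore identifiable by Lemma~\ref{lem:dcn_complete}) but \emph{is} an ancestor of the full slice $V_{t_x+1}$, so that the algorithm's intermediate query $P(V_{t_x+1}|V_{t_x-1},do(X))$ contains a hedge and fails. The paper states this abstractly and points to Figure~\ref{fig:dcn_no_hedge} for the concrete witness, which is exactly the construction you outline.
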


\begin{proof}
Let $D$ be an DCN. Let $X$ be such that $D$ contains two $R$-rooted C-forests $F$ and $F'$, $F'\subseteq F$, $F\cap X \neq 0$, $F'\cap X = 0$. Let $Y$ be such that $R\not\subset An(Y)_{D_{\bar{X}}}$. The condition for $Y$ implies that $D$ does not contain a hedge, and is therefore identifiable by Lemma~\ref{lem:dcn_complete}. Let the set of variables at time slice $t_x+1$ of $D$,  $V_{t_x+1}$, be such that $R\subset An(V_{t_x+1})_{D_{\bar{X}}}$. By Definition~\ref{def:hedge}, $D$ contains a hedge for $P(V_{t_x+1}|V_{t_x-1},do(X))$. The identification of $P(Y|do(X))$ requires the DCN-ID algorithm to identify $P(V_{t_x+1}|V_{t_x-1},do(X))$ which fails.
\qed\end{proof}

\begin{figure}[H]
\begin{center}
\includegraphics[width=8cm,height=4cm]{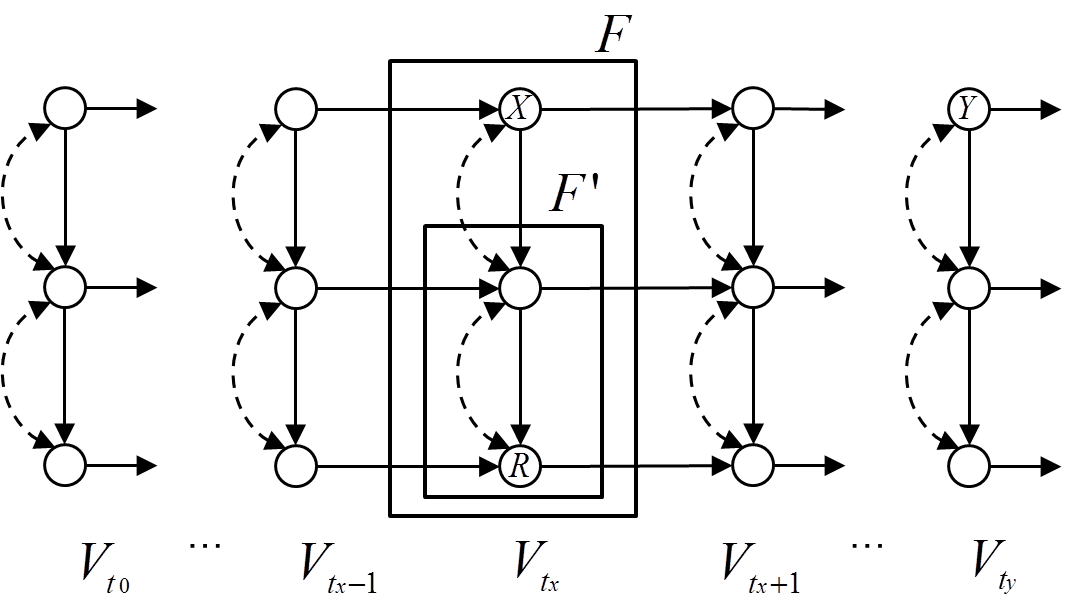}
\end{center}
\caption{Identifiable Dynamic Causal Network which the DCN-ID algorithm fails to identify. $F$ and $F'$ are $R$-rooted C-forests, but since $R$ is not an ancestor of $Y$ there is no hedge for $P(Y|do(X))$. However $R$ is an ancestor of $V_{t_x+1}$ and DCN-ID fails when finding the hedge for $P(V_{t_x+1}|V_{t_x-1}, do(X))$.}
\label{fig:dcn_no_hedge}
\end{figure}

Figure~\ref{fig:dcn_no_hedge} shows an identifiable DCN that DCN-ID fails to identify.

The proof of Lemma~\ref{lem:dcn_id_not_complete} provides the framework to build a complete algorithm for identification of DCNs.

\subsection{Complete DCN identification algorithm with Static Confounders}
\label{sec:completestatic}

The DCN-ID algorithm can be modified so that no hedges are introduced if none existed in the original network. This is done at the cost of more complicated notation, because the fragments of network to be analyzed do no longer correspond to natural time slices. More delicate surgery is needed.

\begin{lemma}
\label{lem:hedge_static}
Let $D$ be a DCN with static confounders. Let $X\subseteq V_{t_x}$ and $Y\subseteq V_{t_y}$ for two time slices $t_x < t_y$. If there is a hedge $H$ for $P(Y|do(X))$ in $D$ then $H\subseteq V_{t_x}$.
\end{lemma}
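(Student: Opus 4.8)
The plan is to exploit the defining structural feature of a DCN with static confounders: bidirected (confounder) edges only connect variables within the same time slice, and directed ``$+1$''-edges are the only way information flows forward. A hedge for $P(Y|do(X))$ consists of two $R$-rooted C-forests $F' \subseteq F$ with $F \cap X \neq \emptyset$ and $F' \cap X = \emptyset$. Since $X \subseteq V_{t_x}$, and $F$ is a single C-component (connected by bidirected edges) intersecting $X$, the first step is to argue that $F$ lies entirely inside $V_{t_x}$: a C-component of a DCN with static confounders cannot span two time slices, because that would require a bidirected edge crossing slices, which the static-confounder hypothesis forbids. Hence $F \subseteq V_{t_x}$, and a fortiori $F' \subseteq F \subseteq V_{t_x}$, so the whole hedge $H = F \cup F'$ satisfies $H \subseteq V_{t_x}$.

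More carefully, I would first invoke the definition of C-component (maximal set of variables joined by bidirected edges) together with Definition~\ref{def:staticconfounder}, which says every confounder path crosses at most $\beta$ time steps; under the convention adopted in Section~\ref{sec:id_dcn} that one ``time slice'' already bundles $\beta$ time samples, confounder edges stay within a slice, so any connected component of the bidirected subgraph is contained in some single $V_t$. Then, because $F$ is a C-component meeting $X \subseteq V_{t_x}$, that single slice must be $V_{t_x}$, giving $F \subseteq V_{t_x}$. The C-forest $F'$ is a subset of $F$ by the hedge definition, so $F' \subseteq V_{t_x}$ as well, and therefore $H \subseteq V_{t_x}$.

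The only subtlety — and the step I expect to be the main obstacle — is making precise what ``$H \subseteq V_{t_x}$'' should mean and checking it is the right claim: a hedge is the pair $(F,F')$ rather than a single vertex set, so one must state that both forests live in $V_{t_x}$ (equivalently, $F \cup F' \subseteq V_{t_x}$). One should also double-check the edge case where $F$ touches more than one slice through \emph{directed} edges labeled ``$+1$'': this is allowed, but it does not affect the argument, because a C-component is defined purely through bidirected edges and the directed structure is irrelevant to which slice $F$ occupies as a C-component. Once that is pinned down, the lemma follows immediately from the static-confounder hypothesis with no further computation.
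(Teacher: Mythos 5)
Your proposal is correct and follows essentially the same route as the paper's (very terse) proof: both rest on the observation that $F$ and $F'$ are C-forests, hence connected by bidirected edges, and that the static-confounder hypothesis confines any such bidirected-connected set to a single slice, which must be $V_{t_x}$ since $F\cap X\neq\emptyset$. Your version is in fact slightly more careful than the paper's, since you correctly place $F'$ in $V_{t_x}$ via $F'\subseteq F$ rather than via a direct confounder connection to $X$ (which $F'$ need not have, as $F'\cap X=\emptyset$).
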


\begin{proof}
By definition of hedge, $F$ and $F'$ are connected by confounders to $X$. As $D$ has only static confounders $F$, $F'$ and $X$ must be within $t_x$.
\qed\end{proof}

\begin{lemma}
\label{lem:static_complete}
Let $D$ be a DCN with static confounders. Let $X\subseteq V_{t_x}$ and $Y\subseteq V_{t_y}$ for two time slices $t_x < t_y$.
$P(Y|do(X))$ is identifiable if and only if $P(V_{t_x+1}\cap An(Y)|V_{t_x-1}, do(X))$ is identifiable.
\end{lemma}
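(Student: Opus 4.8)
The plan is to establish a two-way equivalence by exploiting Lemma~\ref{lem:hedge_static} together with the hedge criterion of Lemma~\ref{lem:dcn_complete}, and by relating hedges for $P(Y|do(X))$ directly to hedges for $P(V_{t_x+1}\cap An(Y)|V_{t_x-1},do(X))$. The key observation is that by Lemma~\ref{lem:hedge_static} any hedge lives entirely inside the slice $V_{t_x}$, so whether a hedge "counts" is governed solely by the condition $R\subset An(\cdot)_{D_{\bar X}}$ on the C-forest root $R$. Thus I would reduce both identifiability statements to the same condition on which subsets of $V_{t_x}$ can serve as roots of a forbidden hedge, namely: there exist $R$-rooted C-forests $F'\subseteq F$ in $V_{t_x}$ with $F\cap X\neq\emptyset$, $F'\cap X=\emptyset$, and $R\subseteq An(Y)_{D_{\bar X}}$.

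First I would prove the easy direction: if $P(V_{t_x+1}\cap An(Y)|V_{t_x-1},do(X))$ is identifiable then so is $P(Y|do(X))$. Suppose for contradiction that $P(Y|do(X))$ has a hedge $(F,F')$ with root $R$. By Lemma~\ref{lem:hedge_static}, $F\subseteq V_{t_x}$, and $R\subseteq An(Y)_{D_{\bar X}}$. Since causal effects only propagate forward in time and $R\subseteq V_{t_x}$, every directed path from $R$ to $Y\subseteq V_{t_y}$ with $t_y>t_x$ must pass through the slice $V_{t_x+1}$; hence the intersection $V_{t_x+1}\cap An(Y)_{D_{\bar X}}$ is nonempty and $R\subseteq An(V_{t_x+1}\cap An(Y))_{D_{\bar X}}$ (the first slice crossed on the way to $Y$ consists of ancestors of $Y$). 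Therefore $(F,F')$ is also a hedge for $P(V_{t_x+1}\cap An(Y)|V_{t_x-1},do(X))$ — note $X\subseteq V_{t_x}$ is consistent with the "$do$" set and the conditioning on $V_{t_x-1}$ only adds a past slice that cannot create or destroy a hedge confined to $V_{t_x}$ — contradicting its identifiability. This direction is the one we already exploited implicitly in Theorem~\ref{thm:static}, now stated as an iff.

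For the converse, assume $P(Y|do(X))$ is identifiable and suppose $P(V_{t_x+1}\cap An(Y)|V_{t_x-1},do(X))$ has a hedge $(F,F')$ with root $R$; by Lemma~\ref{lem:hedge_static} applied in the appropriate subgraph, $F\subseteq V_{t_x}$ and $R\subseteq An(V_{t_x+1}\cap An(Y))_{D_{\bar X}}$. The point is that $An(V_{t_x+1}\cap An(Y))_{D_{\bar X}}\subseteq An(Y)_{D_{\bar X}}$: anything that is an ancestor of a node which is itself an ancestor of $Y$ is an ancestor of $Y$. Hence $R\subseteq An(Y)_{D_{\bar X}}$ and $(F,F')$ is a hedge for $P(Y|do(X))$, contradicting identifiability. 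Combining both directions with Lemma~\ref{lem:dcn_complete} yields the equivalence.

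The main obstacle I anticipate is the bookkeeping around the extra conditioning variable $V_{t_x-1}$ and the "$\cap An(Y)$" restriction: I must check carefully that conditioning on a full past slice $V_{t_x-1}$, handled via the identity $P(A|B,do(X))=P(A,B|do(X))/P(B|do(X))$ from the Remark in Section~\ref{sec:id}, does not alter the set of relevant hedges — intuitively it cannot, since $V_{t_x-1}$ sits strictly before $t_x$ and by Lemma~\ref{lem:hedge_static} every hedge is trapped in $V_{t_x}$, but this should be spelled out. A secondary subtlety is confirming that "the first forward slice crossed toward $Y$" argument genuinely gives $V_{t_x+1}\cap An(Y)\neq\emptyset$ whenever $An(Y)_{D_{\bar X}}$ meets $V_{t_x}$; this uses only that DCN edges never go backward in time, so it is safe, but it is the hinge of the whole proof and deserves an explicit sentence.
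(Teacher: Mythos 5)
Your proposal is correct and follows essentially the same route as the paper: both reduce the question via Lemma~\ref{lem:hedge_static} to whether a C-forest pair rooted at some $R\subseteq V_{t_x}$ satisfies the root-ancestor condition, and both establish that $R\subseteq An(Y)_{D_{\bar X}}$ holds iff $R\subseteq An(V_{t_x+1}\cap An(Y),V_{t_x-1})_{D_{\bar X}}$ by the forward-in-time direction of edges (your "first slice crossed" observation) together with transitivity of ancestry. The paper phrases this as a case analysis on whether slice $t_x$ contains such a C-forest pair rather than as a contradiction, but the substance, including the handling of the conditioning set $V_{t_x-1}$ via the ratio identity, is the same.
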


\begin{proof} 
(\textbf{if}) By Lemma~\ref{lem:dcn_complete}, if 
\begin{align*}
P(V_{t_x+1}\cap An(Y)&|V_{t_x-1},do(X)) \\
&=\frac{P(V_{t_x+1}\cap An(Y),V_{t_x-1}|do(X))}{P(V_{t_x-1})}
\end{align*} 
is identifiable then there is no hedge for this expression in $D$. By Lemma~\ref{lem:hedge_static} if $D$ has static confounders, a hedge must be within time slice $t_x$. If time slice $t_x$ does not contain two $R$-rooted C-forests $F$ and $F'$ such that $F'\subseteq F$, $F\cap X \neq 0$, $F'\cap X = 0$, then there is no hedge for any set $Y$ so there is no hedge for the expression $P(Y|do(X))$ which makes it identifiable. Now let's assume time slice $t_x$ contains two $R$-rooted C-forests $F$ and $F'$ such that $F'\subseteq F$, $F\cap X \neq 0$, $F'\cap X = 0$, then $R\not\subset An(V_{t_x+1}\cap An(Y),V_{t_x-1})_{D_{\bar{X}}}$. As $R$ is in time slice $t_x$, this implies $R\not\subset An(Y)_{D_{\bar{X}}}$ and so there is no hedge for the expression $P(Y|do(X))$ which makes it identifiable.

(\textbf{only if}) By Lemma~\ref{lem:dcn_complete}, if $P(Y|do(X))$ is identifiable then there is no hedge for $P(Y|do(X))$ in $D$. By Lemma~\ref{lem:hedge_static} if $D$ has static confounders, a hedge must be within time slice $t_x$. If time slice $t_x$ does not contain two $R$-rooted C-forests $F$ and $F'$ such that $F'\subseteq F$, $F\cap X \neq 0$, $F'\cap X = 0$, then there is no hedge for any set $Y$ so there is no hedge for the expression 
\begin{align*}
P(V_{t_x+1}\cap An(Y)&|V_{t_x-1},do(X)) \\
&=\frac{P(V_{t_x+1}\cap An(Y),V_{t_x-1}|do(X))}{P(V_{t_x-1})}
\end{align*} 
which makes it identifiable. Now let's assume time slice $t_x$ contains two $R$-rooted C-forests $F$ and $F'$ such that $F'\subseteq F$, $F\cap X \neq 0$, $F'\cap X = 0$, then $R\not\subset An(Y)_{D_{\bar{X}}}$ (if $R\subset An(Y)_{D_{\bar{X}}}$ $D$ would contain a hedge by definition). As $R$ is in time slice $t_x$, $R\not\subset An(Y)_{D_{\bar{X}}}$ implies $R\not\subset An(V_{t_x+1}\cap An(Y))_{D_{\bar{X}}}$ and $R\not\subset An(V_{t_x+1}\cap An(Y),V_{t_x-1})_{D_{\bar{X}}}$ so there is no hedge for $P(V_{t_x+1}\cap An(Y)|V_{t_x-1}, do(X))$ which makes this expression identifiable.
\qed\end{proof}

\begin{lemma}
\label{lem:A_generic}
Assume that an expression $P(V'_{t+\alpha}|V_{t},do(X))$ is identifiable for some $\alpha>0$ and $V'_{t+\alpha}\subseteq V_{t+\alpha}$. Let $A$ be the matrix whose entries $A_{ij}$ correspond to the probabilities $P(V'_{t+\alpha} = v_j|V_t = v_i, do(X))$. Then $P(V'_{t+\alpha}|do(X)) = A\,P(V_t|do(X))$.
\end{lemma}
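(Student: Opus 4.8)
The plan is to follow the proof of Lemma~\ref{lem:A} almost verbatim; the only new feature is that $V'_{t+\alpha}$ is in general a proper subset of the full slice $V_{t+\alpha}$, so the matrix $A$ is rectangular — its rows indexed by the value-assignments to $V_t$, its columns by the value-assignments to $V'_{t+\alpha}$ — rather than square. All of the reasoning will take place inside the post-intervention distribution $P(\,\cdot\mid do(X))$, which is itself a genuine probability distribution over the endogenous variables, so the ordinary laws of probability apply to it unchanged, regardless of whether the full-slice transition matrix is identifiable.

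The key step is the law of total probability, conditioning on the value of the full slice $V_t$:
\[ P(V'_{t+\alpha}\mid do(X)) \;=\; \sum_{v} P\!\left(V'_{t+\alpha}\mid V_t=v,\,do(X)\right)\,P\!\left(V_t=v\mid do(X)\right), \]
where $v$ ranges over all joint value-assignments to the variables of $V_t$. By hypothesis each factor $P(V'_{t+\alpha}\mid V_t=v,do(X))$ is well defined and identifiable, and by the definition of $A$ these factors are exactly the rows of $A$; the scalars $P(V_t=v\mid do(X))$ are the entries of the column vector $P(V_t\mid do(X))$. Reading this identity entry by entry — the same case-by-case check that proved Lemma~\ref{lem:A} — one recognizes the right-hand side as the matrix–vector product $A\,P(V_t\mid do(X))$ in the transition-matrix indexing convention used throughout the paper, which yields $P(V'_{t+\alpha}\mid do(X)) = A\,P(V_t\mid do(X))$.

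I do not expect a genuine obstacle: the statement is a bookkeeping refinement of Lemma~\ref{lem:A}, and the proof is essentially one application of the chain rule rewritten in matrix notation. The only point that needs a little care is keeping the rectangular shape of $A$ consistent with the paper's left-multiplication convention for transition matrices. As an (optional) sanity check one may also note that, in the special case where the full-slice expression $P(V_{t+\alpha}\mid V_t,do(X))$ happens to be identifiable, $A$ is obtained from the full square transition matrix of Lemma~\ref{lem:A} by summing the columns corresponding to assignments of $V_{t+\alpha}$ that agree on $V'_{t+\alpha}$ (i.e.\ by marginalizing out $V_{t+\alpha}\setminus V'_{t+\alpha}$), so in that case the claim is immediate from Lemma~\ref{lem:A} and linearity of the marginalization map; the argument above covers the general case where only the marginal is assumed identifiable.
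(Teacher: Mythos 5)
Your proposal is correct and takes essentially the same route as the paper, whose entire proof is the one-line ``case by case evaluation of $A$'s entries''; your law-of-total-probability expansion over the values of $V_t$ is exactly the computation that one-liner alludes to, with the rectangular shape of $A$ handled appropriately. The only caveat is a transposition bookkeeping issue (whether rows or columns of $A$ are indexed by assignments to $V_t$ under the left-multiplication convention), which is present in the paper's own index notation as well and does not affect the substance.
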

\begin{proof} 
Case by case evaluation of $A$'s entries. 
\qed\end{proof}

\begin{figure}[H]
\hrule\medskip
Function \textbf{cDCN-ID}($Y$,$t_y$, $X$,$t_x$, $G$,$C$,$T$,$P(V_{t_0})$)

INPUT: 
\begin{itemize}
\item DCN defined by a causal graph $G$ 
on a set of variables $V$ and a set $C \subseteq V \times V$ describing causal relations from $V_t$ to $V_{t+1}$ for every $t$
\item transition matrix $T$ representing the probabilities $P(V_{t+1}|V_{t})$ derived from observational data
\item a set $Y$ included in $V_{t_y}$
\item a set $X$ included in $V_{t_x}$
\item distribution $P(V_{t_0})$ at the initial state, 
\end{itemize}

OUTPUT: The distribution  $P(Y|do(X))$ if it is identifiable, or else FAIL

\begin{enumerate}
\item let $G'$ be the acyclic graph formed by joining $G_{t_x-2}$, $G_{t_x-1}$, $G_{t_x}$, and $G_{t_x+1}$
by the causal relations given by $C$;
\item run the standard ID algorithm for expression $P(V_{t_x+1}\cap An(Y)|V_{t_x-1},do(X))$ on $G'$; if it returns FAIL, return FAIL;
\item else, use the resulting distribution to compute the transition matrix $A$, where $A_{ij} = P(V_{t_x+1}\cap An(Y)=v_i|V_{t_x-1}=v_j, do(X))$;
\item let $M_t$ be the matrix $T$ marginalized as $P(V_{t}\cap An(Y) = v_j|V_{t-1}\cap An(Y) = v_i)$
\item return $\left[\prod\limits_{t=t_x+2}^{t_y} M_t\right]A\,T^{t_x-1-t_0}\,P(V_{t_0})$;
\end{enumerate}

\caption{The cDCN algorithm for DCNs with static confounders}
\label{fig:static-algo-complete}
\medskip\hrule
\end{figure}

\begin{lemma}
\label{lem:Tafter_marginal}
Let $D$ be a DCN with static confounders. 
Let $X\subseteq V_{t_x}$ and $Y\subseteq V_{t_y}$ for two time slices $t_x < t_y$. Then 
$P(Y|do(X))=\left[\prod\limits_{t=t_x+2}^{t_y} M_t\right]P(V_{t_x+1}\cap An(Y)|do(X))$ where $M_t$ is the matrix whose entries correspond to the probabilities $P(V_{t}\cap An(Y) = v_j|V_{t-1}\cap An(Y) = v_i)$.
\end{lemma}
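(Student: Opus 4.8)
The plan is to show that, once we restrict attention to $An(Y)$, the post-intervention distribution propagates from time slice $t_x+1$ onward exactly like the \emph{observational} chain marginalized onto $An(Y)$, so that the stated matrix product is just that chain unrolled. First I would write, for each $t$ with $t_x+2 \le t \le t_y$, the one-step decomposition
\[
P(V_t\cap An(Y)\,|\,do(X)) \;=\; \sum_{V_{t-1}} P\big(V_t\cap An(Y)\,\big|\,V_{t-1},do(X)\big)\,P(V_{t-1}\,|\,do(X)),
\]
and simplify the conditional factor in two steps. \textbf{Step 1 (remove the action).} Since $t-1 > t_x$ and $V_{t-1}$ contains all direct causes in $G_{t-1}$ of variables in $G_t$, conditioning on $V_{t-1}$ d-separates $X$ from $V_t$ in $D_{\bar X}$; arguing exactly as in the proof of Lemma~\ref{lem:Tafter} (swap $do(X)$ for the observation $X$ via Lemma~\ref{lem:past}, then drop $X$ by rule~1 of do-calculus) one gets $P(V_t\,|\,V_{t-1},do(X)) = P(V_t\,|\,V_{t-1})$, and marginalizing $V_t\setminus An(Y)$ on both sides gives $P(V_t\cap An(Y)\,|\,V_{t-1},do(X)) = P(V_t\cap An(Y)\,|\,V_{t-1})$.

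\textbf{Step 2 (restrict the conditioning set).} The key fact is that $An(Y)$ is an ancestral set, so its complement $V\setminus An(Y)$ consists of non-ancestors of $An(Y)$; hence the latent projection of $D$ onto $An(Y)$ is just the subgraph of $D$ induced on $An(Y)$, and the observational distribution restricted to $An(Y)$ is Markov with respect to that subgraph, which still has only static confounders (no bidirected edge crossing a slice boundary). Moreover every parent in $G_{t-1}$ of a node of $V_t\cap An(Y)$ is itself an ancestor of $Y$ and therefore already lies in $V_{t-1}\cap An(Y)$. Combining these, $P(V_t\cap An(Y)\,|\,V_{t-1}) = P(V_t\cap An(Y)\,|\,V_{t-1}\cap An(Y))$, which are precisely the entries of $M_t$. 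Feeding Steps 1 and 2 back into the decomposition, the conditional factor no longer depends on $V_{t-1}\setminus An(Y)$, so the inner sum over those variables collapses $P(V_{t-1}\,|\,do(X))$ onto $P(V_{t-1}\cap An(Y)\,|\,do(X))$, and what remains is the matrix--vector identity $P(V_t\cap An(Y)\,|\,do(X)) = M_t\,P(V_{t-1}\cap An(Y)\,|\,do(X))$.

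Iterating this identity for $t = t_x+2,\dots,t_y$ yields
\[
P(V_{t_y}\cap An(Y)\,|\,do(X)) \;=\; \Big[\prod_{t=t_x+2}^{t_y} M_t\Big]\,P(V_{t_x+1}\cap An(Y)\,|\,do(X)),
\]
and since $Y\subseteq V_{t_y}\cap An(Y)$ a final marginalization, left implicit in the displayed identity, gives $P(Y\,|\,do(X))$. I expect the genuinely delicate point to be Step 2: in a semi-Markovian graph one must verify that shrinking the conditioning set from $V_{t-1}$ to $V_{t-1}\cap An(Y)$ opens no spurious d-connecting path, and this rests entirely on ancestral closure of $An(Y)$ together with the fact that static confounders never link consecutive time slices, so that the induced subgraph over $An(Y)$ is still ``static'' and its sliced marginal is a genuine first-order Markov chain. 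Step 1 and the unrolling are routine, being reapplications of Lemmas~\ref{lem:past} and \ref{lem:Tafter} and the matrix propagation already used in Lemmas~\ref{lem:Tbefore} and \ref{lem:A_generic}.
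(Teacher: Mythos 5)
Your proof is correct and follows essentially the same route as the paper's: restrict attention to $An(Y)$ and then iterate the Lemma~\ref{lem:Tafter}-style argument slice by slice to unroll the matrix product. The paper's own proof is only a two-sentence sketch of this, so your elaboration (in particular the justification in Step~2 that ancestral closure plus the absence of cross-slice confounders lets one shrink the conditioning set from $V_{t-1}$ to $V_{t-1}\cap An(Y)$) supplies exactly the details the paper leaves implicit.
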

\begin{proof}
For the identification of $P(Y|do(X))$ we can restrict our attention to the subset of variables in $D$ that are ancestors of Y. Then we repeatedly apply Lemma~\ref{lem:Tafter} on this subset from $t=t_x+2$ to $t=t_y$ until we find $P(V_{t_y}\cap An(Y)|do(X))=P(Y|do(X))$.
\qed\end{proof}

\begin{theorem}
\label{thm:calc_static_complete}
Let $D$ be a DCN with static confounders and transition matrix $T$. 
Let $X\subseteq V_{t_x}$ and $Y\subseteq V_{t_y}$ for two time slices $t_x < t_y$.
If $P(Y|do(X))$ is identifiable then $P(Y|do(X))=\left[\prod\limits_{t=t_x+2}^{t_y} M_t\right]AT^{t_x-1-t_0}P(V_{t_0})$ where $A$ is the matrix whose entries $A_{ij}$ correspond to $P(V_{t_x+1}\cap An(Y)|V_{t_x-1}, do(X))$ and $M_t$ is the matrix whose entries correspond to the probabilities $P(V_{t}\cap An(Y) = v_j|V_{t-1}\cap An(Y) = v_i)$.
\end{theorem}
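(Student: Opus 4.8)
The plan is to obtain the formula by composing, in order, the lemmas already proved in this subsection, in exact analogy with the way Theorem~\ref{thm:static} was derived from Lemmas~\ref{lem:Tbefore}, \ref{lem:A} and \ref{lem:Tafter}; the one difference is that every time slice is now intersected with $An(Y)$ and that the matrix $A$ is the ``hedge-aware'' one of Lemma~\ref{lem:static_complete} rather than the full-slice transition matrix.

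Concretely, I would proceed as follows. First, since $P(Y|do(X))$ is assumed identifiable, Lemma~\ref{lem:static_complete} tells us that $P(V_{t_x+1}\cap An(Y)|V_{t_x-1},do(X))$ is identifiable, so the matrix $A$ of the statement is well defined; each $M_t$ is merely a marginalization of the observational transition matrix $T$, hence always well defined. Second, Lemma~\ref{lem:Tbefore} gives $P(V_{t_x-1}|do(X)) = T^{t_x-1-t_0}P(V_{t_0})$, because $t_x-1<t_x$ and an intervention in the future does not alter earlier distributions. Third, I would apply Lemma~\ref{lem:A_generic} with $V'_{t+\alpha} = V_{t_x+1}\cap An(Y)$, $t = t_x-1$ and $\alpha = 2$; the identifiability just established lets us conclude
\[
P(V_{t_x+1}\cap An(Y)|do(X)) = A\,P(V_{t_x-1}|do(X)) = A\,T^{t_x-1-t_0}P(V_{t_0}).
\]
Finally, Lemma~\ref{lem:Tafter_marginal} states $P(Y|do(X)) = \left[\prod_{t=t_x+2}^{t_y} M_t\right]P(V_{t_x+1}\cap An(Y)|do(X))$; substituting the previous display into it yields the claimed expression $\left[\prod_{t=t_x+2}^{t_y} M_t\right]AT^{t_x-1-t_0}P(V_{t_0})$.

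The substantive work is confined to the supporting lemmas: Lemma~\ref{lem:static_complete}, whose hedge argument reduces the identifiability of $P(Y|do(X))$ to that of $P(V_{t_x+1}\cap An(Y)|V_{t_x-1},do(X))$, and Lemma~\ref{lem:Tafter_marginal}, which invokes rule~3 of do-calculus to discard, slice by slice, the variables that are not ancestors of $Y$. Within the theorem itself the only things I would verify carefully are (i) the dimensional consistency of the matrix chain --- $A$ produces a distribution over $V_{t_x+1}\cap An(Y)$, and the rightmost factor $M_{t_x+2}$ of $\prod_{t=t_x+2}^{t_y}M_t$ acts precisely on distributions over $V_{t_x+1}\cap An(Y)$, so the composition type-checks all the way to $V_{t_y}\cap An(Y)$ --- and (ii) that the product is read in the order $M_{t_y}M_{t_y-1}\cdots M_{t_x+2}$, and that when $V_{t_y}\cap An(Y)$ strictly contains $Y$ a final marginalization $\sum_{(V_{t_y}\cap An(Y))\setminus Y}$ over the surplus variables is to be understood, exactly as in Theorem~\ref{thm:static}. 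There is no real obstacle beyond this bookkeeping; essentially all the difficulty has been pushed into Lemmas~\ref{lem:static_complete} and \ref{lem:Tafter_marginal}.
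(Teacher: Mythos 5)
Your proposal matches the paper's own proof step for step: Lemma~\ref{lem:Tbefore} for the pre-intervention evolution, Lemma~\ref{lem:static_complete} plus Lemma~\ref{lem:A_generic} for the $A$-step, and Lemma~\ref{lem:Tafter_marginal} for the remaining slices. Your extra bookkeeping remarks (product order, and the residual marginalization when $V_{t_y}\cap An(Y)$ strictly contains $Y$) are correct and in fact slightly more careful than the paper, which tacitly assumes $V_{t_y}\cap An(Y)=Y$.
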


\begin{proof} 
Applying Lemma~\ref{lem:Tbefore}, we obtain that
$$P(V_{t_x-1}|do(X)) = T^{t_x-1-t_0}P(V_{t_0}).$$
By Lemma~\ref{lem:static_complete} $P(V_{t_x+1}\cap An(Y)|V_{t_x-1}, do(X))$ is identifiable so Lemma~\ref{lem:A_generic} guarantees that $P(V_{t_x+1}\cap An(Y)|do(X)) = A\,P(V_{t_x-1}|do(X)) = A\,T^{t_x-1-t_0}P(V_{t_0})$. Then we apply Lemma~\ref{lem:Tafter_marginal} and obtain the resulting expression $P(Y|do(X))=\left[\prod\limits_{t=t_x+2}^{t_y} M_t\right]AT^{t_x-1-t_0}P(V_{t_0})$.
\qed\end{proof}

The cDCN-ID algorithm for identification of DCNs with static confounders is given in Figure~\ref{fig:static-algo-complete}. 

\begin{theorem}[Soundness and completeness]
\label{thm:completeness}
The cDCN-ID algorithm for DCNs with static confounders is sound and complete.
\end{theorem}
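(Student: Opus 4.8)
The plan is to obtain soundness and completeness by chaining three facts already in hand: the hedge-locality of Lemma~\ref{lem:hedge_static}, the reduction of Lemma~\ref{lem:static_complete}, and the closed form of Theorem~\ref{thm:calc_static_complete}, combined with the soundness and completeness of the underlying ID algorithm \cite{shpitser2006identification} and the graph-restriction guarantee of Lemma~\ref{lem:graphID}. The observation that makes this work is that cDCN-ID has a single point of failure: the call to ID on line~2 for the expression $P(V_{t_x+1}\cap An(Y)\mid V_{t_x-1},do(X))$; every other line is an unconditional matrix manipulation.

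First I would argue that this line~2 call succeeds exactly when $P(Y\mid do(X))$ is identifiable in $D$. Using the Remark of Section~\ref{sec:id}, the expression is the ratio of $P(V_{t_x+1}\cap An(Y),V_{t_x-1}\mid do(X))$ and $P(V_{t_x-1}\mid do(X))$; the denominator equals $P(V_{t_x-1})$ by Lemma~\ref{lem:futureact} and is therefore trivially identifiable, so identifiability of the whole expression is equivalent to identifiability of the numerator. The numerator is an ordinary do-expression whose variables lie in time slices $t_x-1$ through $t_x+1$, so by Lemma~\ref{lem:graphID} running ID on $G'$ (which is exactly those slices augmented by the preceding slice $t_x-2$) returns the same verdict and the same formula as running it on all of $\hat D$; and by Lemma~\ref{lem:hedge_static} any hedge for such an expression in a static-confounder DCN lies inside $V_{t_x}\subseteq G'$, so identifiability in $G'$ and in $D$ coincide. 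Finally, by Lemma~\ref{lem:static_complete}, $P(V_{t_x+1}\cap An(Y)\mid V_{t_x-1},do(X))$ is identifiable in $D$ if and only if $P(Y\mid do(X))$ is.

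For \textbf{soundness}, suppose cDCN-ID returns a distribution. Then line~2 did not fail, so by completeness of ID and the equivalences above $P(Y\mid do(X))$ is identifiable in $D$; hence Theorem~\ref{thm:calc_static_complete} applies and gives $P(Y\mid do(X))=\left[\prod_{t=t_x+2}^{t_y} M_t\right]A\,T^{t_x-1-t_0}P(V_{t_0})$, which is exactly the value returned on line~5, once one checks that $A$ computed on line~3 and the matrices $M_t$ obtained on line~4 by marginalizing $T$ are the matrices named in the theorem (the latter being justified by Lemma~\ref{lem:Tafter_marginal}). For \textbf{completeness}, suppose $P(Y\mid do(X))$ is identifiable in $D$; then by Lemma~\ref{lem:static_complete} and the equivalences above the line~2 expression is identifiable in $G'$, so ID succeeds there by its completeness, and since all subsequent lines are unconditional the algorithm returns a distribution rather than FAIL.

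The main obstacle I anticipate is bookkeeping rather than depth: one must verify carefully that (i) the extra conditioning set $V_{t_x-1}$ is handled cleanly through the numerator/denominator split so that both the hedge criterion and Lemma~\ref{lem:graphID} legitimately apply to a plain do-expression; (ii) the set $An(Y)$ used inside the four-slice window $G'$ agrees with $An(Y)$ in $\hat D$ when intersected with slices $t_x-1,\dots,t_x+1$, which holds because conditioning on all of $V_{t_x-1}$ blocks every path entering from earlier slices; and (iii) the marginalized matrices $M_t$ genuinely realize the transition $P(V_t\cap An(Y)\mid V_{t-1}\cap An(Y))$, which is precisely what Lemma~\ref{lem:Tafter_marginal} secures. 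Once these three points are pinned down, soundness and completeness follow from the lemma chain with no further computation.
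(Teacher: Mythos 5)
Your proposal is correct and follows essentially the same route as the paper, whose entire proof is the one-line observation that completeness follows from Lemma~\ref{lem:static_complete} and soundness from Theorem~\ref{thm:calc_static_complete}. Your version simply makes explicit the supporting details the paper leaves implicit (the numerator/denominator split, the role of Lemma~\ref{lem:graphID} and Lemma~\ref{lem:hedge_static} in equating identifiability on $G'$ with identifiability on $\hat D$, and the matching of the matrices $A$ and $M_t$ via Lemma~\ref{lem:Tafter_marginal}), so no change of approach is involved.
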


\begin{proof} 
The completeness derives from Lemma~\ref{lem:static_complete} and the soundness from Theorem~\ref{thm:calc_static_complete}.
\qed\end{proof}

\subsection{Complete DCN identification algorithm with Dynamic Confounders}
\label{sec:completesdynamic}

We now discuss the complete identification of DCNs with dynamic confounders. First we introduce the concept of dynamic time span from which we derive two lemmas.

\begin{definition}[Dynamic time span]
Let $D$ be a DCN with dynamic confounders and $X\subseteq V_{t_x}$. Let $t_m$ be the maximal time slice d-connected by confounders to $X$; $t_m-t_x$ is called the dynamic time span of $X$ in $D$.
\end{definition}

Note that the dynamic time span of $X$ in $D$ can be in some cases infinite, the simplest case being when $X$ is connected by a confounder to itself at $V_{t_x+1}$. In this paper we consider finite dynamic time spans only. We will label the dynamic time span of $X$ as $t_{dx}$.

\begin{lemma}
\label{lem:hedge_dynamic}
Let $D$ be a DCN with dynamic confounders and $X$, $Y$ sets of variables in $D$. Let $t_{dx}$ be the dynamic time span of $X$ in $D$. If there is a hedge for $P(Y|do(X))$ in $D$ then the hedge does not include variables at $t>t_x+t_{dx}$.
\end{lemma}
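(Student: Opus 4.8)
The plan is to reduce the claim to the definition of a hedge together with the structural constraints that dynamic confounders impose on a DCN. Recall that a hedge for $P(Y|do(X))$ consists of two $R$-rooted C-forests $F$ and $F'$ with $F' \subseteq F$, $F \cap X \neq \emptyset$, $F' \cap X = \emptyset$, and $R \subseteq An(Y)_{D_{\bar X}}$. Since $F$ and $F'$ are single C-components (maximal sets of variables connected by bidirected edges) and $F$ meets $X$, every variable of $F$ is connected to $X$ by a chain of confounder edges inside $D$. So the first step is to observe that all of $F$ — and hence all of $F' \subseteq F$ — lies within the set of variables that are d-connected to $X$ purely through confounders.

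Next I would invoke the dynamic time span: by definition $t_m = t_x + t_{dx}$ is the maximal time slice d-connected by confounders to $X$, so no variable that is confounder-connected to $X$ can sit in a slice $t > t_x + t_{dx}$. Combining this with the previous observation, $F \subseteq \bigcup_{t \le t_x + t_{dx}} V_t$, and likewise for $F'$. The remaining ingredient is the root $R$: since $R \subseteq F'$ (the roots of the C-forest are among its variables), $R$ also lives in slices at most $t_x + t_{dx}$. Therefore the entire hedge $F \cup F'$ — counting the C-forest variables and their root — contains no variable at any time slice strictly after $t_x + t_{dx}$, which is exactly the statement.

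I would phrase the argument carefully around one subtlety worth flagging: "d-connected by confounders" must be read as connection along a path consisting solely of bidirected edges, which is precisely the notion of lying in the same C-component, so the definition of dynamic time span meshes cleanly with the definition of C-forest. There is no parametric or probabilistic content here; it is a purely graph-theoretic inclusion argument. The only mild obstacle is making sure that the dynamic time span bounds the C-components in \emph{both} temporal directions relevant to the claim — but the lemma only asserts an upper bound ($t > t_x + t_{dx}$), and $t_{dx}$ was defined precisely as the forward reach of confounder-connectivity from $X$, so nothing more is needed. A one-line proof in the style of Lemma~\ref{lem:hedge_static} suffices:

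\begin{proof}
By definition of hedge, the C-forests $F$ and $F'$ are connected by confounders to $X$, so every variable of $F$ (and hence of $F' \subseteq F$, including the root $R$) lies in a C-component meeting $X$. By definition of the dynamic time span $t_{dx}$, no variable d-connected to $X$ through confounders lies in a time slice $t > t_x + t_{dx}$. Hence the hedge contains no variables at $t > t_x + t_{dx}$.
\qed\end{proof}
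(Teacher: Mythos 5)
Your proof is correct and follows essentially the same route as the paper's: the hedge's C-forests $F$ and $F'$ are confounder-connected to $X$, and the dynamic time span $t_{dx}$ bounds how far forward in time such confounder-connectivity can reach, so no hedge variable can lie at $t > t_x + t_{dx}$. The paper states this in two sentences; you merely spell out the intermediate inclusions ($F' \subseteq F$, $R \subseteq F'$, $F$ lies in a C-component meeting $X$) more explicitly.
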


\begin{proof}
By definition of hedge, $F$ and $F'$ are connected by confounders to $X$. The maximal time point connected by confounders to $X$ is $t_x+t_{dx}$.
\qed\end{proof}

\begin{figure}[H]
\hrule\medskip
Function \textbf{cDCN-ID}($Y$,$t_y$, $X$,$t_x$, $G$,$C$,$C'$,$T$,$P(V_{t_0})$)

INPUT: 
\begin{itemize}
\item DCN defined by a causal graph $G$ 
on a set of variables $V$ and a set $C \subseteq V \times V$ describing causal relations from $V_t$ to $V_{t+1}$ for every $t$, and a set $C' \subseteq V \times V$ describing confounder relations from $V_t$ to $V_{t+1}$ for every $t$
\item transition matrix $T$ for $G$ derived
from observational data
\item a set $Y$ included in $V_{t_y}$
\item a set $X$ included in $V_{t_x}$
\item distribution $P(V_{t_0})$ at the initial state, 
\end{itemize}

OUTPUT: The distribution $P(Y|do(X))$ if it is identifiable or else FAIL

\begin{enumerate}
\item let $G'$ be the acyclic graph formed by joining $G_{t_x-2}$, $G_{t_x-1}$, $G_{t_x}$, and $G_{t_x+1}$
by the causal relations given by $C$ and confounders given by $C'$;
\item run the standard ID algorithm for expression $P(V_{t_x+t_{dx}+1}\cap An(Y)|V_{t_x-1}, do(X))$ on $G'$; if it returns FAIL, return FAIL;
\item else, use the resulting distribution to compute the transition matrix $A$, where $A_{ij} = P(V_{t_x+t_{dx}+1}\cap An(Y)=v_i|V_{t_x-1}=v_j, do(X))$;
\item for each $t$ from $t_x+t_{dx}+2$ up to $t_y$:
	\begin{enumerate}
    \item let $G''$ be the causal graph composed of time slices $G_{t_x-1}$, $G_{t_x}$, \dots, $G_{t}$
	\item run the standard ID algorithm on $G''$ for the expression $P(V_t\cap An(Y)|V_{t-1}\cap An(Y),do(X))$; if it returns FAIL, return FAIL;
    \item else, use the resulting distribution to compute the transition matrix $M_t$, where $(M_t)_{ij} = P(V_{t}\cap An(Y)=v_i|V_{t-1}\cap An(Y)=v_j, do(X))$;
	\end{enumerate}
\item return $ \left[\prod\limits_{t=t_x+t_{dx}+2}^{t_y} M_t\right]\,A\,T^{t_x-1-t_0}P(V_{t_0})$;
\end{enumerate}

\caption{The cDCN algorithm for DCNs with dynamic confounders}
\label{fig:dynamic-algo-complete}
\medskip\hrule
\end{figure}

\begin{lemma}
\label{lem:dynamic_complete}
Let $D$ be a DCN with dynamic confounders. Let $X\subseteq V_{t_x}$ and $Y\subseteq V_{t_y}$ for two time slices $t_x, t_y$. Let $t_{dx}$ be the dynamic time span of $X$ in $D$ and $t_x + t_{dx} < t_y$.
$P(Y|do(X))$ is identifiable if and only if $P(V_{t_x+t_{dx}+1}\cap An(Y)|V_{t_x-1}, do(X))$ is identifiable.
\end{lemma}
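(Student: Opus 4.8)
The plan is to obtain Lemma~\ref{lem:dynamic_complete} as the dynamic-confounder counterpart of Lemma~\ref{lem:static_complete}, with Lemma~\ref{lem:hedge_dynamic} playing the role that Lemma~\ref{lem:hedge_static} played there. First I would strip away the conditioning bar: writing the conditional as the ratio of $P(V_{t_x+t_{dx}+1}\cap An(Y),V_{t_x-1}|do(X))$ to $P(V_{t_x-1}|do(X))$, and noting that the denominator equals $P(V_{t_x-1})$ by Lemma~\ref{lem:futureact}(2) and so is always identifiable, reduces identifiability of the conditional to identifiability of the joint $P(Y'|do(X))$, where $Y':=(V_{t_x+t_{dx}+1}\cap An(Y))\cup V_{t_x-1}$. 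By Lemma~\ref{lem:dcn_complete}, the whole statement then becomes purely structural: $D$ has a hedge for $P(Y|do(X))$ if and only if $D$ has a hedge for $P(Y'|do(X))$.

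The engine of the equivalence is a ``bottleneck'' observation about hedge roots. Let $F,F'$ be any hedge in $D$ with root set $R$; by Lemma~\ref{lem:hedge_dynamic} every vertex of $F$, in particular every $r\in R$, lies in a time slice $\le t_x+t_{dx}$. Because $t_y>t_x+t_{dx}$, every directed path in $D_{\bar{X}}$ from such an $r$ to $Y$ runs strictly forward in time and therefore meets slice $t_x+t_{dx}+1$ at some vertex $w$; this $w$ lies on a path to $Y$, so $w\in V_{t_x+t_{dx}+1}\cap An(Y)$, and the initial segment of the path witnesses $r\in An(V_{t_x+t_{dx}+1}\cap An(Y))_{D_{\bar{X}}}$. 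The reverse inclusion is immediate, since each vertex of $V_{t_x+t_{dx}+1}\cap An(Y)$ is an ancestor of $Y$ and lies later than $X$, so its paths to $Y$ avoid the edges deleted in $D_{\bar{X}}$. Hence, for any $R$ confined to slices $\le t_x+t_{dx}$, $R\subseteq An(Y)_{D_{\bar{X}}}$ iff $R\subseteq An(V_{t_x+t_{dx}+1}\cap An(Y))_{D_{\bar{X}}}$; in particular $R\subseteq An(Y)_{D_{\bar{X}}}$ implies $R\subseteq An(Y')_{D_{\bar{X}}}$. I would also record that a root reached by following child-edges forward from an $X$-vertex of $F$ must itself lie in a slice $\ge t_x$ (an $X$-vertex of $F$ is not a root of $F$, else it would be a root of $F'$ as well, contradicting $F'\cap X=\emptyset$, so it has a child in $F$).

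With these tools, both directions mimic the proof of Lemma~\ref{lem:static_complete}. If $D$ contains no pair of $R$-rooted C-forests $F'\subseteq F$ with $F\cap X\neq\emptyset$ and $F'\cap X=\emptyset$, then neither $P(Y|do(X))$ nor $P(Y'|do(X))$ has a hedge (Definition~\ref{def:hedge}) and both are identifiable. Otherwise, fixing such an $F,F'$ with root set $R$, the forward implication above gives one direction; for the converse one assumes $R\subseteq An(Y')_{D_{\bar{X}}}$ but $R\not\subseteq An(Y)_{D_{\bar{X}}}$, picks $r\in R$ with $r\notin An(Y)_{D_{\bar{X}}}$, deduces by the bottleneck equivalence that $r\notin An(V_{t_x+t_{dx}+1}\cap An(Y))_{D_{\bar{X}}}$, hence $r\in An(V_{t_x-1})_{D_{\bar{X}}}$ and $r$ lies in a slice $\le t_x-1$, and derives a contradiction from the coexistence of such an $r$ with a root of $F$ in a slice $\ge t_x$ under the constraints that $F,F'$ share the same root set and are bidirected-connected. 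The step I expect to be the main obstacle is precisely this last one: ruling out a hedge whose root set contains a variable strictly earlier than $t_x$ that is an ancestor of $V_{t_x-1}$ but not of $Y$. In the static case this situation is impossible because Lemma~\ref{lem:hedge_static} confines the whole hedge to the single slice $t_x$, whereas Lemma~\ref{lem:hedge_dynamic} only bounds the hedge from above, so the argument must additionally control how far backward a C-forest attached to $X$ can reach. If that cannot be closed in full generality, the natural fallback is to condition on $V_{t_x-1-s}$ rather than $V_{t_x-1}$, where $s$ is the number of slices by which confounders reach back from $X$, which restores the static-style localization; in either case the remainder of the proof is bookkeeping parallel to Lemma~\ref{lem:static_complete}.
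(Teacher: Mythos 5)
Your proposal follows the same route as the paper's proof, which is literally ``repeat the proof of Lemma~\ref{lem:static_complete} with $V_{t_x+1}$ replaced by $V_{t_x+t_{dx}+1}$, Lemma~\ref{lem:hedge_static} replaced by Lemma~\ref{lem:hedge_dynamic}, and `time slice $t_x$' replaced by `time slices $t_x$ to $t_x+t_{dx}$'\,''. Your reduction of the conditional to the joint over $Y'=(V_{t_x+t_{dx}+1}\cap An(Y))\cup V_{t_x-1}$, the appeal to Lemma~\ref{lem:dcn_complete} to make everything a statement about hedges, and the ``bottleneck'' observation that a root in a slice $\le t_x+t_{dx}$ is an ancestor of $Y$ iff it is an ancestor of $V_{t_x+t_{dx}+1}\cap An(Y)$ are exactly the ingredients the paper intends, and you spell them out more carefully than the paper does.

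The obstacle you flag at the end, however, is real and you do not close it --- and neither does the paper. The static proof works because Lemma~\ref{lem:hedge_static} confines the entire hedge to the single slice $t_x$, so no root can be an ancestor of $V_{t_x-1}$ without being an ancestor of $Y$. In the dynamic case, Lemma~\ref{lem:hedge_dynamic} only bounds the hedge \emph{from above} (no variables at $t>t_x+t_{dx}$); since dynamic confounders also connect $V_{t_x}$ to earlier slices, the C-forests attached to $X$ can reach back to $t_x-1$ and before, and a root $r$ there (e.g.\ one all of whose forward paths pass through $X$, so that $r\notin An(Y)_{D_{\bar X}}$ while trivially $r\in An(V_{t_x-1})$) would witness a hedge for $P(Y'|do(X))$ without witnessing one for $P(Y|do(X))$. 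This is precisely the case the paper's substitution ``time slice $t_x$'' $\to$ ``time slices $t_x$ to $t_x+t_{dx}$'' silently assumes away. So your write-up is not a complete proof, but it is an honest account of the same argument, and the missing step is a gap in the paper's own one-line proof rather than a defect specific to your approach; your suggested repair (conditioning on a slice far enough in the past to clear the backward reach of the confounders) is a sensible way to restore the static-style localization, though it proves a slightly different statement than the lemma as written.
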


\begin{proof} 
Similarly to the proof of Lemma~\ref{lem:static_complete}, but replacing "static" by "dynamic", $V_{t_x+1}$ by $V_{t_x+t_{dx}+1}$, Lemma~\ref{lem:hedge_static} by Lemma~\ref{lem:hedge_dynamic}, and "time slice $t_x$" by "time slices $t_x$ to $t_x+t_{dx}$".
\qed\end{proof}

\begin{theorem}
\label{thm:calc_dynamic_complete}
Let $D$ be a DCN with dynamic confounders and $T$ be its transition matrix under no interventions. Let $X\subseteq V_{t_x}$ and $Y\subseteq V_{t_y}$ for two time slices $t_x, t_y$. Let $t_{dx}$ be the dynamic time span of $X$ in $D$ and $t_x + t_{dx} < t_y$.
If $P(Y|do(X))$ is identifiable then:
\begin{enumerate}
\item $P(V_{t_x+t_{dx}+1}\cap An(Y)|V_{t_x-1}, do(X))$ is identifiable by matrix $A$ 
\item For all
$t > t_x+t_{dx}+1$, $P(V_{t}\cap An(Y)|V_{t-1}\cap An(Y), do(X))$ is identifiable by matrix $M_t$
\item $P(Y|do(X))=\left[\prod\limits_{t=t_x+t_{dx}+2}^{t_y} M_t\right]\,A\,T^{t_x-1-t_0}P(V_{t_0})$
\end{enumerate}
\end{theorem}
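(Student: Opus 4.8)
The plan is to mirror the proof of Theorem~\ref{thm:calc_static_complete}, with the single transient slice $t_x$ replaced by the whole dynamically confounded block $t_x,\dots,t_x+t_{dx}$, and with the dynamic-confounder versions of the auxiliary lemmas substituted for their static counterparts. The three claims are established in turn.

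Claim (1) is immediate from Lemma~\ref{lem:dynamic_complete}: since $P(Y|do(X))$ is assumed identifiable, so is $P(V_{t_x+t_{dx}+1}\cap An(Y)\mid V_{t_x-1},do(X))$, and we let $A$ be the matrix of its conditional probabilities, $A_{ij}=P(V_{t_x+t_{dx}+1}\cap An(Y)=v_i\mid V_{t_x-1}=v_j,do(X))$. For claim (2), fix $t>t_x+t_{dx}+1$, so that $t-1>t_x+t_{dx}$. By the definition of the dynamic time span (equivalently, by the reach bound underlying Lemma~\ref{lem:hedge_dynamic}), no confounder emanating from $X$ touches slice $t-1$, so beyond $t_x+t_{dx}$ the d-separation exploited in Lemma~\ref{lem:Tafter} is recovered. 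Restricting to ancestors of $Y$, the point is that every slice-$(t-1)$ parent of a variable in $V_t\cap An(Y)$ is itself an ancestor of $Y$ and hence lies in $V_{t-1}\cap An(Y)$; therefore conditioning on $V_{t-1}\cap An(Y)$ blocks every directed path from $X$ into $V_t\cap An(Y)$, and, since no bidirected edge from the confounded region of $X$ reaches slice $t-1$, the remaining paths are blocked as well. By do-calculus this yields $P(V_t\cap An(Y)\mid V_{t-1}\cap An(Y),do(X))=P(V_t\cap An(Y)\mid V_{t-1}\cap An(Y))$, an observational quantity and hence identifiable; let $M_t$ be its conditional-probability matrix. (Equivalently: by Lemma~\ref{lem:hedge_dynamic} no hedge for this expression can reach slice $t-1$, and the conditioning removes whatever lies below it, so ID succeeds.)

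For claim (3), I would chain the matrices exactly as in the proof of Theorem~\ref{thm:calc_static_complete}. Lemma~\ref{lem:Tbefore} gives $P(V_{t_x-1}\mid do(X))=T^{t_x-1-t_0}P(V_{t_0})$. Applying Lemma~\ref{lem:A_generic} with the matrix $A$ from claim~(1) (a jump of $t_{dx}+2$ slices, with $V'_{t_x+t_{dx}+1}=V_{t_x+t_{dx}+1}\cap An(Y)$) gives $P(V_{t_x+t_{dx}+1}\cap An(Y)\mid do(X))=A\,T^{t_x-1-t_0}P(V_{t_0})$. Then, as in the proof of Lemma~\ref{lem:Tafter_marginal} but now working in the sub-DCN induced by $An(Y)$ and propagating with the matrices $M_t$ of claim~(2), a repeated application of the same matrix identity carries this distribution forward to $P(V_{t_y}\cap An(Y)\mid do(X))=\left[\prod_{t=t_x+t_{dx}+2}^{t_y}M_t\right]A\,T^{t_x-1-t_0}P(V_{t_0})$. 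Since $Y\subseteq V_{t_y}\cap An(Y)$, identifying $P(V_{t_y}\cap An(Y)\mid do(X))$ with $P(Y|do(X))$ (marginalizing out any variables of $V_{t_y}\cap An(Y)$ not in $Y$, exactly as in Lemma~\ref{lem:Tafter_marginal}) gives the stated formula.

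The main obstacle is claim (2). One must check carefully that restricting the forward propagation to the sub-blocks $V_t\cap An(Y)$ does not activate any new collider path, and that a finite dynamic time span really does bound the confounder reach, so that beyond slice $t_x+t_{dx}$ the graph behaves, for the purpose of Lemma~\ref{lem:Tafter}, like a DCN with static confounders; the observation that every slice-$(t-1)$ parent of a node in $V_t\cap An(Y)$ is an ancestor of $Y$ is the crux that makes the $An(Y)$-restricted d-separation go through. A secondary point, not needed for the theorem itself but used by the companion algorithm, is that by Lemma~\ref{lem:graphID} the identifiability of $P(V_{t_x+t_{dx}+1}\cap An(Y)\mid V_{t_x-1},do(X))$ and the computation of $A$ can be carried out on a bounded sub-DCN rather than on all of $\hat D$.
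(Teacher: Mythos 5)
Claims (1) and (3) follow the paper's route (Lemma~\ref{lem:dynamic_complete} plus Lemma~\ref{lem:A_generic}, then matrix chaining as in Theorem~\ref{thm:dynamic}) and are fine. The genuine problem is your claim (2): you conclude that for $t>t_x+t_{dx}+1$ the quantity $P(V_t\cap An(Y)\mid V_{t-1}\cap An(Y),do(X))$ equals the \emph{observational} transition probability $P(V_t\cap An(Y)\mid V_{t-1}\cap An(Y))$. This is false in general for DCNs with dynamic confounders, and it is precisely the distinction the paper insists on: after an intervention such a system does \emph{not} recover its non-interventional transition matrix, which is why the algorithm in Figure~\ref{fig:dynamic-algo-complete} issues a separate ID call for each $M_t$ rather than marginalizing $T$. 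The d-separation you invoke fails because of colliders at slice $t-1$: take a dynamic confounder $U$ between some $A_{t-1}$ and some $B_t$ (such confounders recur at every slice boundary and need not be connected to $X$ at all), with $A_{t-1}$ a descendant of $X$. The path $X\rightarrow\cdots\rightarrow A_{t-1}\leftarrow U\rightarrow B_t$ has $A_{t-1}$ as a collider that lies in your conditioning set, so conditioning \emph{opens} it and $X$ remains d-connected to $V_t$ given $V_{t-1}$. Hence rules 2/3 of do-calculus do not let you drop $do(X)$; the bound on the confounder reach of $X$ only controls paths that enter slice $t-1$ \emph{through a bidirected edge attached to $X$'s confounded component}, not these induced-dependence paths.

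What survives, and what the paper actually uses, is the weaker conclusion that the expression is \emph{identifiable} (as an interventional, generally time-dependent matrix), via a hedge argument rather than d-separation: by Lemma~\ref{lem:hedge_dynamic} any hedge for $P(V_t\cap An(Y)\mid V_{t-1}\cap An(Y),do(X))$ must lie within slices $t_x$ through $t_x+t_{dx}$, and within that region the ancestors of $(V_t\cap An(Y))\cup(V_{t-1}\cap An(Y))$ coincide with the ancestors of $Y$, so such a hedge would also be a hedge for $P(Y|do(X))$, contradicting the assumed identifiability. Your parenthetical ``equivalently'' remark gestures at the first half of this but omits the ancestor-coincidence step that actually rules the hedge out, and in any case it is not equivalent to your d-separation claim. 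With claim (2) repaired this way, the rest of your argument goes through.
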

\begin{proof} 
We obtain the first statement from Lemma~\ref{lem:dynamic_complete} and Lemma~\ref{lem:A_generic}. Then if $t > t_x+t_{dx}+1$ the set $(V_{t}\cap An(Y),V_{t-1}\cap An(Y))$ has the same ancestors than $Y$ within time slices $t_x$ to $t_x+t_{dx}+1$, so if $P(Y|do(X))$ is identifiable then $P(V_{t}\cap An(Y)|V_{t-1}\cap An(Y), do(X))$ is identifiable, which proves the second statement. Finally we obtain the third statement similarly to the proof of Theorem~\ref{thm:dynamic} but using statements 1 and 2 as proved instead of assumed.
\qed\end{proof}

The cDCN-ID algorithm for DCNs with dynamic confounders is given in Figure~\ref{fig:dynamic-algo-complete}.

\begin{theorem}[Soundness and completeness]
\label{thm:completeness_dynamic}
The cDCN-ID algorithm for DCNs with dynamic confounders is sound and complete.
\end{theorem}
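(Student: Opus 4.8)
The plan is to mirror the structure of the proof of Theorem~\ref{thm:completeness} (the static case), which split cleanly into a completeness part drawn from the characterization lemma and a soundness part drawn from the explicit formula. Here the relevant ingredients are already in place: Lemma~\ref{lem:dynamic_complete} gives the iff-characterization of identifiability in terms of $P(V_{t_x+t_{dx}+1}\cap An(Y)|V_{t_x-1},do(X))$, and Theorem~\ref{thm:calc_dynamic_complete} gives both the sub-identifiability of the transition matrices $A$ and $M_t$ and the closed-form product formula. So the theorem should follow by assembling these.

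First I would argue \emph{completeness}. Suppose $P(Y|do(X))$ is identifiable. By Lemma~\ref{lem:dynamic_complete}, $P(V_{t_x+t_{dx}+1}\cap An(Y)|V_{t_x-1},do(X))$ is identifiable, and by Lemma~\ref{lem:hedge_dynamic} together with Lemma~\ref{lem:graphID} this expression can be identified within the graph $G'$ built from the finitely many time slices $G_{t_x-2},\dots,G_{t_x+t_{dx}+1}$ — so the ID call on line~2 of the algorithm succeeds rather than returning FAIL. For the loop on line~4, statement~2 of Theorem~\ref{thm:calc_dynamic_complete} guarantees that each $P(V_t\cap An(Y)|V_{t-1}\cap An(Y),do(X))$ is identifiable, and restricting to the graph $G''$ of slices $G_{t_x-1},\dots,G_t$ is justified again by Lemma~\ref{lem:graphID}; hence no ID call inside the loop returns FAIL. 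Conversely, if $P(Y|do(X))$ is \emph{not} identifiable, then by Lemma~\ref{lem:dynamic_complete} the line~2 expression is not identifiable either, so by soundness of the ID algorithm~\cite{shpitser2006identification} line~2 returns FAIL and so does cDCN-ID. Thus the algorithm returns FAIL exactly when the effect is non-identifiable.

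Next I would argue \emph{soundness}: whenever cDCN-ID returns a distribution, it is $P(Y|do(X))$. This is immediate from Theorem~\ref{thm:calc_dynamic_complete}(3): the matrix $A$ computed on line~3 is precisely the one in that theorem (its entries are $P(V_{t_x+t_{dx}+1}\cap An(Y)=v_i|V_{t_x-1}=v_j,do(X))$, correctly obtained via the ID algorithm, using the conditioning-bar identity from the Remark in Section~\ref{sec:id}), the matrices $M_t$ computed in the loop are the ones named in the theorem, and line~5 returns exactly the product $\left[\prod_{t=t_x+t_{dx}+2}^{t_y} M_t\right]A\,T^{t_x-1-t_0}P(V_{t_0})$, which Theorem~\ref{thm:calc_dynamic_complete} states equals $P(Y|do(X))$. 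The soundness of the underlying ID algorithm and Lemma~\ref{lem:graphID} guarantee that the restricted-graph calls return the same quantities as the full biinfinite network.

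The main obstacle I anticipate is not in the logical skeleton — which is short — but in making sure the bookkeeping around $t_{dx}$, $An(Y)$, and the restricted graphs $G'$, $G''$ is airtight: specifically, that Lemma~\ref{lem:hedge_dynamic} really licenses working only up to slice $t_x+t_{dx}+1$ when detecting hedges, that the marginalization to $An(Y)$ does not itself create or destroy a hedge (this is exactly what Lemma~\ref{lem:dynamic_complete} and the ancestor argument in Theorem~\ref{thm:calc_dynamic_complete} are buying us, so it needs to be invoked rather than reproved), and that finiteness of $t_{dx}$ — explicitly assumed in this section — is what keeps line~2's graph $G'$ finite and the whole procedure terminating. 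Once those citations are lined up, the proof is the two-line assembly: completeness from Lemma~\ref{lem:dynamic_complete}, soundness from Theorem~\ref{thm:calc_dynamic_complete}.

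\begin{proof}
The completeness derives from Lemma~\ref{lem:dynamic_complete} (together with Lemma~\ref{lem:hedge_dynamic} and Lemma~\ref{lem:graphID}, which justify carrying out the hedge-detection within the finite graphs $G'$ and $G''$, finiteness of $t_{dx}$ being assumed throughout this section) and the soundness of the ID algorithm~\cite{shpitser2006identification}: the line~2 call returns FAIL if and only if $P(Y|do(X))$ is non-identifiable, and by Theorem~\ref{thm:calc_dynamic_complete}(2) none of the loop calls on line~4 returns FAIL when $P(Y|do(X))$ is identifiable. The soundness derives from Theorem~\ref{thm:calc_dynamic_complete}(3): the matrices $A$ and $M_t$ computed by the algorithm are exactly those named in that theorem, so the expression returned on line~5 equals $P(Y|do(X))$.
\qed\end{proof}
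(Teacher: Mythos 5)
Your proposal is correct and follows essentially the same route as the paper: the paper derives completeness from statements (1) and (2) of Theorem~\ref{thm:calc_dynamic_complete} (statement (1) being exactly your appeal to Lemma~\ref{lem:dynamic_complete}) and soundness from statement (3). Your additional remarks about Lemma~\ref{lem:hedge_dynamic}, Lemma~\ref{lem:graphID}, and the finiteness of $t_{dx}$ merely make explicit the bookkeeping the paper leaves implicit.
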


\begin{proof} 
The completeness derives from the first and second statements of Theorem~\ref{thm:calc_dynamic_complete}. The soundness derives from the third statement of Theorem~\ref{thm:calc_dynamic_complete}.
\qed\end{proof}

\section{Transportability in DCN}
\label{sec:transp}
\cite{pearl2011transportability} introduced the sID algorithm, based on do-calculus, to identify a transport formula between two domains, where the effect in a target domain can be estimated from experimental results in a source domain and some observations on the target domain, thus avoiding the need to perform an experiment on the target domain.

Let us consider a country with a number of alternative roads linking city pairs in different provinces. Suppose that the alternative roads are all consistent with the same causal model (such as the one in Figure~\ref{fig:dcn_confounder_compact}, for example) but have different traffic patterns (proportion of cars/trucks, toll prices, traffic light durations...).
Traffic authorities in one of the provinces may have experimented with policies and observed the impact on, say, traffic delay. This information may be usable to predict the average travel delay in another province for a given traffic policy. The source domain (province where the impact of traffic policy has already been monitored) and target domain (new province) share the same causal relations among variables, represented by a single DCN (see Figure~\ref{fig:dcntransport}). 

The target domain may have specific distributions of the toll price and traffic signs, which are accounted for in the model by adding a set of selection variables to the DCN, pointing at variables whose distribution differs among the two domains. If the DCN with the selection variables is identifiable for the traffic delay upon increasing the toll price, then the DCN identification algorithm provides a transport formula which combines experimental probabilities from the source domain and observed distributions from the target domain. Thus the traffic authorities in the new province can evaluate the impacts before effectively changing traffic policies. This amounts to relational knowledge transfer learning between the two domains \cite{pan2010survey}.

\begin{figure}[H]
\begin{center}
\includegraphics[width=2.8cm]{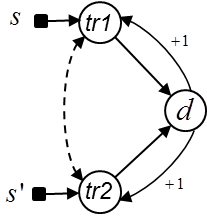}
\end{center}
\caption{A DCN with selection variables $s$ and $s'$, representing the differences in the distribution of variables $tr1$ and $tr1$ in two domains $M_1$ and $M_2$ (two provinces in the same country). This model can be used to evaluate the causal impacts of traffic policy in the target domain $M_2$ based on the impacts observed in the source domain $M_1$.}
\label{fig:dcntransport}
\end{figure}

Consider a DCN with static confounders only. We have demonstrated already that for identification of the effects
of an intervention at time $t_x$ we can restrict our attention to four time slices of the DCN, $t_x-2$, $t_x-1$, $t_x$, and $t_x+1$. Let $M_1$ and $M_2$ be two domains based on this same DCN, 
though the distributions of some variables in $M_1$ and $M_2$ may differ. Then we have 
\begin{align*}
P_{M_2}(Y|do(X))=T_{M_2}^{t_y-(t_x+1)}A_{M_2}T_{M_2}^{t_x-1-t_0}P(V_{t_0}),
\end{align*}
where the entry $ij$ of matrix $A_{M_2}$ corresponds to the transition probability $P_{M_2}(V_{t_x+1}=v_i|V_{t_x-1}=v_j, do(X))$.


By applying the identification algorithm sID, with selection variables, to the elements of matrix $A$ we then obtain a transport formula, which combines experimental distributions in $M_1$ with observational distributions in $M_2$. The algorithm for transportability of causal effects with static confounders is given in Figure~\ref{fig:transport-static-algo}.

\begin{figure}[H]
\hrule\medskip
Function \textbf{DCN-sID}($Y$,$t_y$, $X$,$t_x$, $G$,$C$,$T_{M_2}$,$P_{M_2}(V_{t_0})$,$I_{M_1}$)

INPUT: 
\begin{itemize}
\item DCN defined by a causal graph $G$ (common to both source and target domains $M_1$ and $M_2$) over a set of variables $V$ and a set $C \subseteq V \times V$ describing causal relations from $V_t$ to $V_{t+1}$ for every $t$
\item transition matrix $T_{M_2}$ for $G$ derived
from observational data in $M_2$
\item a set $Y$ included in $V_{t_y}$
\item a set $X$ included in $V_{t_x}$
\item distribution $P_{M_2}(V_{t_0})$ at the initial state in $M_2$
\item set of interventional distributions $I_{M_1}$ in $M_1$
\item set S of selection variables

\end{itemize}

OUTPUT: The distribution  $P_{M_2}(Y|do(X))$ in $M_2$ in terms of $T_{M_2}$, $P_{M_2}(V_{t_0})$ and $I_{M_1}$, or else FAIL

\begin{enumerate}
\item let $G'$ be the acyclic graph formed by joining $G_{t_x-2}$, $G_{t_x-1}$, $G_{t_x}$, and $G_{t_x+1}$
by the causal relations given by $C$;
\item run the standard sID algorithm for expression $P(V_{t_x+1}|V_{t_x-1},do(X))$ on $G'$; if it returns FAIL, return FAIL;
\item else, use the resulting transport formula to compute the transition matrix $A$, where $A_{ij} = P(V_{t_x+1}=v_i|V_{t_x-1}=v_j, do(X))$;
\item return $\sum_{V_{t_y}\setminus Y} T^{t_y-(t_x+1)}\,A\,T^{t_x-1-t_0}\,P(V_{t_0})$;
\end{enumerate}

\caption{The DCN-sID algorithm for the transportability in DCNs with static confounders}
\label{fig:transport-static-algo}
\medskip\hrule
\end{figure}

For brevity we omit the algorithm extension to dynamic confounders, and the completeness results, which follow the same confounder caveats already explained in the previous sections.

\section{Experiments}
\label{sec:experiments}
In this section we provide some numerical examples of causal effect identifiability in DCN, using the algorithms proposed in this paper.

In our first example, the DCN in Figure~\ref{fig:dcn_confounder_compact} represents how  the traffic between two cities evolves. There are two roads and drivers choose every day to use one or the other road. Traffic conditions on either road on a given day ($tr1$, $tr2$) affect the travel delay between the cities on that same day ($d$). Driver experience influences the road choice next day, impacting $tr1$ and $tr2$. For simplicity we assume variables $tr1$, $tr2$ and $d$ to be binary. Let's assume that from Monday to Friday the joint distribution of the variables follow transition matrix $T_1$ while on Saturday and Sunday they follow transition matrix $T_2$. These transition matrices indicate the traffic distribution change from the previous day to the current day. This system is a DCN with static confounders, and has a markov chain representation as in Figure~\ref{fig:dcn_confounder_compact}.

\[ T_1 = \left( \begin{matrix}
0.0&0.4&0.0&0.3&0.0&0.2&0.0&0.1 \\
0.0&0.4&0.0&0.3&0.0&0.2&0.0&0.1 \\
0.0&0.4&0.0&0.3&0.0&0.2&0.0&0.1 \\
0.0&0.4&0.0&0.3&0.0&0.2&0.0&0.1 \\
0.2&0.0&0.0&0.1&0.4&0.0&0.0&0.3 \\
0.2&0.0&0.0&0.1&0.4&0.0&0.0&0.3 \\
0.2&0.0&0.0&0.1&0.4&0.0&0.0&0.3 \\
0.2&0.0&0.0&0.1&0.4&0.0&0.0&0.3 \\
\end{matrix} \right)\] 

\[ T_2 = \left( \begin{matrix}
0.1&0.0&0.3&0.1&0.2&0.2&0.0&0.1 \\
0.1&0.0&0.3&0.1&0.2&0.2&0.0&0.1 \\
0.1&0.0&0.3&0.1&0.2&0.2&0.0&0.1 \\
0.1&0.0&0.3&0.1&0.2&0.2&0.0&0.1 \\
0.0&0.2&0.1&0.0&0.1&0.3&0.3&0.0 \\
0.0&0.2&0.1&0.0&0.1&0.3&0.3&0.0 \\
0.0&0.2&0.1&0.0&0.1&0.3&0.3&0.0 \\
0.0&0.2&0.1&0.0&0.1&0.3&0.3&0.0 \\
\end{matrix} \right)\] 

The average travel delay $d$ during a two week period is shown in Figure~\ref{fig:chart_exp0}. 

\begin{figure}[H]
\begin{center}
\includegraphics[width=7cm]{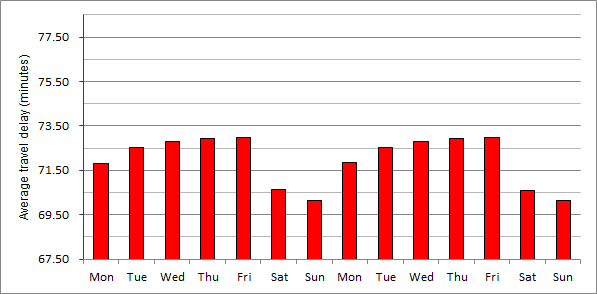}
\end{center}
\caption{Average travel delay of the DCN without intervention.}
\label{fig:chart_exp0}
\end{figure}

Now let's perform an intervention by altering the traffic on the first road $tr1$ and evaluate the subsequent evolution of the average travel delay $d$. We use the algorithm for DCNs with static confounders. We trigger line 1 of the DCN-ID algorithm in Figure~\ref{fig:static-algo-complete} and build a graph consisting of four time slices $G'=(G_{t_x-2},G_{t_x-1},G_{t_x},G_{t_x+1})$ as shown in Figure~\ref{fig:fig_example}.

\begin{figure}[H]
\begin{center}
\includegraphics[width=7cm]{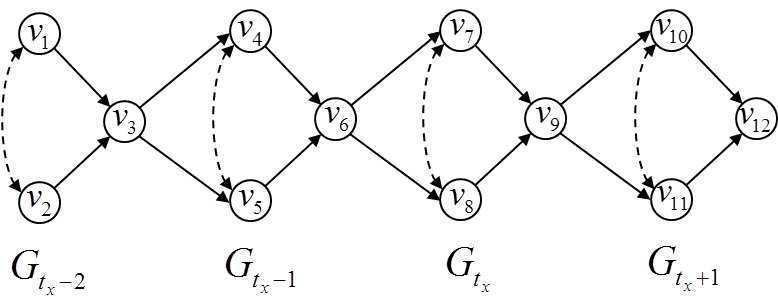}
\end{center}
\caption{Causal graph $G'$ consisting of four time slices of the DCN, from $t_x-2$ to $t_x+1$}
\label{fig:fig_example}
\end{figure}

The ancestors of any future delay at
$t=t_y$ are all the variables in the DCN up to $t_y$, so in line 2 we run the standard ID algorithm for $\alpha=P(v_{10},v_{11},v_{12}|v_4,v_5,v_6, do(v_7))$ on $G'$, which returns the expression $\alpha$:
\begin{align*}
\sum_{v_1,v_2,v_3,v_8,v_9}\frac{P(v_1,v_2,...v_{12})\sum_{v_7,v9}P(v_7,v_8,v_9|v_4,v_5,v_6)}{P(v_4,v_5,v_6)\sum_{v_9}P(v_7,v_8,v_9|v_4,v_5,v_6)}
\end{align*}

Using this expression, line 3 of the algorithm computes the elements of matrix $A$. If we perform the intervention on a Thursday the matrices $A$ for $v_7=0$ and $v_7=1$ can be evaluated from $T_1$.

\[ A_{v_7=0} = \left( \begin{matrix}
0.0&0.4&0.0&0.3&0.0&0.2&0.0&0.1 \\
0.0&0.4&0.0&0.3&0.0&0.2&0.0&0.1 \\
0.0&0.4&0.0&0.3&0.0&0.2&0.0&0.1 \\
0.0&0.4&0.0&0.3&0.0&0.2&0.0&0.1 \\
0.0&0.4&0.0&0.3&0.0&0.2&0.0&0.1 \\
0.0&0.4&0.0&0.3&0.0&0.2&0.0&0.1 \\
0.0&0.4&0.0&0.3&0.0&0.2&0.0&0.1 \\
0.0&0.4&0.0&0.3&0.0&0.2&0.0&0.1 \\
\end{matrix} \right)\]
\[ A_{v_7=1} = \left( \begin{matrix}
0.2&0.0&0.0&0.1&0.4&0.0&0.0&0.3 \\
0.2&0.0&0.0&0.1&0.4&0.0&0.0&0.3 \\
0.2&0.0&0.0&0.1&0.4&0.0&0.0&0.3 \\
0.2&0.0&0.0&0.1&0.4&0.0&0.0&0.3 \\
0.2&0.0&0.0&0.1&0.4&0.0&0.0&0.3 \\
0.2&0.0&0.0&0.1&0.4&0.0&0.0&0.3 \\
0.2&0.0&0.0&0.1&0.4&0.0&0.0&0.3 \\
0.2&0.0&0.0&0.1&0.4&0.0&0.0&0.3 \\
\end{matrix} \right)\] 

In line 4 we find that transition matrices $M_t$ are the same than for the DCN without intervention. Figure~\ref{fig:chart_exp1} shows the average travel delay without intervention, and with intervention on the traffic conditions of the first road.
 
\begin{figure}[H]
\begin{center}
\includegraphics[width=7cm]{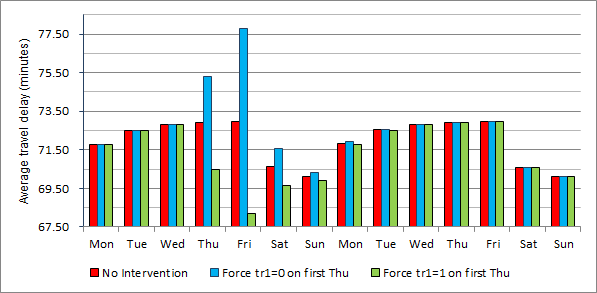}
\end{center}
\caption{Average travel delay of the DCN without intervention, and with interventions $tr1=0$ and $tr1=1$ on the first Thursday}
\label{fig:chart_exp1}
\end{figure}

In a second numerical example, we consider that the system is characterized by a unique transition matrix $T$ and the delay $d$ tends to a steady state. We measure $d$ without intervention and with intervention on $tr1$ at $t=15$. The system's transition matrix $T$ is shown below:

\[ T = \left( \begin{matrix}
0.02&0&0.03&0&0.26&0.13&0.34&0.22 \\
0.02&0&0.03&0&0.26&0.13&0.34&0.22 \\
0.02&0&0.03&0&0.26&0.13&0.34&0.22 \\
0.02&0&0.03&0&0.26&0.13&0.34&0.22 \\
0.34&0.1&0.24&0.21&0&0.02&0.09&0 \\
0.34&0.1&0.24&0.21&0&0.02&0.09&0 \\
0.34&0.1&0.24&0.21&0&0.02&0.09&0 \\
0.34&0.1&0.24&0.21&0&0.02&0.09&0 \\
\end{matrix} \right)\] 

Figure~\ref{fig:chart_exp2} shows the evolution of $d$ with no intervention and with intervention. 

\begin{figure}[H]
\begin{center}
\includegraphics[width=7cm]{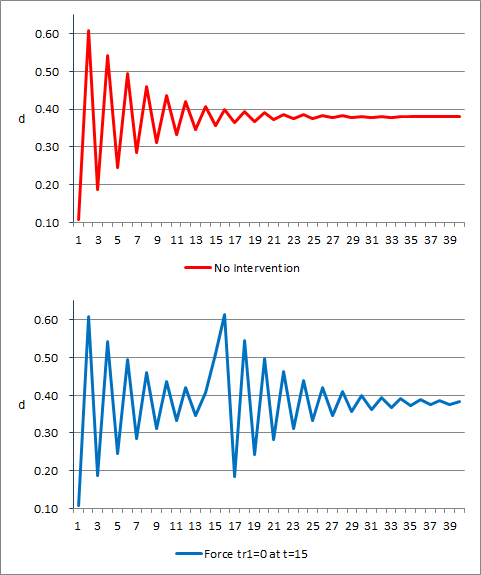}
\end{center}
\caption{Average $d$ of the DCN without intervention and with intervention on $tr1$ at $t=15$.}
\label{fig:chart_exp2}
\end{figure}

As shown in the examples, the DCN-ID algorithm calls ID only once with a graph of size $4|G|$ and evaluates the elements of matrix A with complexity $O((4k)^{(b+2)}$, where $k=3$ is the number of variables per slice and $b=1$ is the number of bits used to encode the variables. The rest is the computation of transition matrix multiplications, which can be done with complexity $O(n.b^2)$, with $n=40-15$ in example 2. To obtain the same result with the ID algorithm by brute force, we would require processing $n$ times the identifiability of a graph of size $40|G|$, with overall complexity $O((k)^{(b+2)}+(2k)^{(b+2)}+(3k)^{(b+2)}+...+(n.k)^{(b+2)})$.

\section{Conclusions and Future Work}
This paper introduces dynamic causal networks and their analysis with do-calculus, so far studied thoroughly only in static causal graphs. We extend the ID algorithm to the identification of DCNs, and remark the difference between static vs.\ dynamic confounders.  We also provide an algorithm for the transportability of causal effects from one domain to another with the same dynamic causal structure.

For future work, note that in the present paper we have assumed all intervened variables to be in the same time slice; removing this restriction may have some moderate interest. We also want to extend the introduction of causal analysis to a number of dynamic settings, including Hidden Markov Models, and study properties of DCNs in terms of Markov chains (conditions for ergodicity, for example). Finally, evaluating the distribution returned by ID is in general unfeasible (exponential in the number of variables and domain size); identifying tractable sub-cases or feasible heuristics is a general question in the area.

\begin{acknowledgements}
Research was partially funded by SGR2014-890 (MACDA) project of
the Generalitat de Catalunya and MINECO project APCOM (TIN2014-57226-
P).
\end{acknowledgements}

\bibliographystyle{spphys}       
\bibliography{dcn}   

\end{document}